\newtheorem{theorem}{Theorem}
\newtheorem{remark}[theorem]{Remark}
\newtheorem{corollary}[theorem]{Corollary}
\newtheorem{lemma}[theorem]{Lemma}
\newtheorem{proposition}[theorem]{Proposition}
\newtheorem{assumption}[theorem]{Assumption}
\newtheorem{exam}{Example}
\newenvironment{proof}[1][Proof]{\textbf{#1.} }{\ \rule{0.5em}{0.5em}}
\def\E{\mathbb E}
\def\PROB{{\mathbb P}}
\def\I{{\mathbb I}}
\def\pr{\PROB}
\def\argmax{\mathop{\rm arg\, max}}
\def\argmin{\mathop{\rm arg\, min}}
\def\blackslug{\hbox{\hskip 1pt \vrule width 4pt height 8pt depth 1.5pt
\hskip 1pt}}
\def\qed{\quad\blackslug\lower 8.5pt\null\par}
\newcommand{\X}{{\cal X}}
\newcommand{\p}[0]{\mathbb{P}}
\DeclarePairedDelimiter\abs{\lvert}{\rvert}\DeclarePairedDelimiter\norm{\lVert}{\rVert}
\let\oldabs\abs
\def\abs{\@ifstar{\oldabs}{\oldabs*}}
\let\oldnorm\norm
\def\norm{\@ifstar{\oldnorm}{\oldnorm*}}
\begin{document}

\twocolumn[

\aistatstitle{A Multiclass Classification Approach to Label Ranking}

\aistatsauthor{Stephan Cl\'emen\c{c}on \And Robin Vogel}

\aistatsaddress{LTCI, T\'el\'ecom Paris, \\Institut Polytechnique de Paris
	    \And IDEMIA/LTCI, T\'el\'ecom Paris \\Institut Polytechnique de Paris 
} ]

\begin{abstract}
     In multiclass classification, the goal is to learn how to predict a random label $Y$, valued in $\mathcal{Y}=\{1,\; \ldots,\; K \}$ with $K\geq 3$, based upon observing a r.v. $X$, taking its values in $\mathbb{R}^q$ with $q\geq 1$ say, by means of a classification rule $g:\mathbb{R}^q\to \mathcal{Y}$ with minimum probability of error $\mathbb{P}\{Y\neq g(X) \}$. However, in a wide variety of situations, the task targeted may be more ambitious, consisting in sorting all the possible label values $y$ that may be assigned to $X$ by decreasing order of the posterior probability $\eta_y(X)=\mathbb{P}\{Y=y \mid X \}$. This article is devoted to the analysis of this statistical learning problem, halfway between multiclass classification and posterior probability estimation (regression) and referred to as \textit{label ranking} here. We highlight the fact that it can be viewed as a specific variant of \textit{ranking median regression} (RMR), where, rather than observing a random permutation $\Sigma$ assigned to the input vector $X$ and drawn from a Bradley-Terry-Luce-Plackett model with conditional preference vector $(\eta_1(X),\; \ldots,\; \eta_K(X))$, the sole information available for training a label ranking rule is the label $Y$ ranked on top, namely $\Sigma^{-1}(1)$. Inspired by recent results in RMR, we prove that under appropriate noise conditions, the One-Versus-One (OVO) approach to multiclassification yields, as a by-product, an optimal ranking of the labels with overwhelming probability. Beyond theoretical guarantees, the relevance of the approach to label ranking promoted in this article is supported by experimental results.
 \end{abstract}

\section{INTRODUCTION}\label{sec:intro}
In the standard formulation of the multiclass classification problem, $(X,Y)$ is a random pair defined on a probability space $(\Omega,\; \mathcal{F},\; \mathbb{P})$ with unknown joint probability distribution $P$, where $Y$ is a label valued in $\mathcal{Y}=\{1,\; \ldots,\; K  \}$ with $K\geq 3$ and the r.v. $X$ takes its values in a possibly high-dimensional Euclidean space, say $\mathbb{R}^q$ with $q\geq 1$, and models some input information that is expected to be useful to predict the output variable $Y$. The objective pursued is to build from training data $\mathcal{D}=\{(X_1,Y_1),\; \ldots,\; (X_n,Y_n)\}$, supposed to be independent copies of the generic pair $(X,Y)$, a (measurable) classifier $g:\mathbb{R}^q\rightarrow \mathcal{Y}$ that nearly minimizes the risk of misclassification
\begin{equation}\label{eq:risk}
L(g)=\mathbb{P}\{Y\neq g(X)\}.
\end{equation}
Let $\eta(x)=(\eta_1(x),\; \ldots,\; \eta_K(x))$ be the vector of posterior probabilities: $\eta_k(x)=\mathbb{P}\{Y=k \mid X=x  \}$, for $x\in \mathbb{R}^q$ and $k\in\{1,\; \ldots,\; K  \}$. For simplicity, we assume here that the distribution of the r.v. $\eta(X)$ is continuous, so that the $\eta_k(X)$'s are pairwise distinct with probability one.
It is well-known that the minimum risk is attained by the Bayes classifier 
$$
g^*(x)=\argmax_{k\in\{1,\; \ldots,\; K  \}}\eta_k(x),
$$
and is equal to 
$$
L^*=L(g^*)=1-\mathbb{E}\left[ \max_{1\leq k\leq K}\eta_{k}(X) \right].
$$
As the distribution $P$ is unknown, a classifier must be built from the training dataset and from the perspective of statistical learning theory, the Empirical Risk Minimization (ERM) paradigm encourages us to replace the risk \eqref{eq:risk} by a statistical estimate $\widehat{L}_n(g)$, typically the empirical version $(1/n)\sum_{i=1}^n\mathbb{I}\{Y_i\neq g(X_i) \}$ denoting by $\mathbb{I}\{\mathcal{E} \}$ the indicator function of any event $\mathcal{E}$, and consider solutions $\widehat{g}_n$ of the optimization problem
\begin{equation}\label{erm}
\min_{g\in \mathcal{G}}\widehat{L}_n(g),
\end{equation}
where the infimum is taken over a class $\mathcal{G}$ of classifier candidates, with controlled complexity (\textit{e.g.} of finite {\sc VC} dimension), though supposed rich enough to yield a small bias error $\inf_{g\in\mathcal{G}}L(g)-L^*$, \textit{i.e.} to include a reasonable approximation of the Bayes classifier $g^*$.
Theoretical results assessing the statistical performance of empirical risk minimizers are very well documented in the literature, see \textit{e.g.} \cite{DGL96}, and a wide collection of algorithmic approaches has been designed in order to solve possibly smoothed/convexified and/or penalized versions of the minimization problem \eqref{erm}.  Denoting by $\mathfrak{S}_K$ the symmetric group of order $K$ (\textit{i.e.} the group of permutations of $\{1,\; \ldots,\; K\}$), another natural statistical learning goal in this setup, halfway between multiclass classification and estimation of the posterior probability function $\eta(x)$ and referred to as \textit{label ranking} throughout the article, is to learn, from the training data $\mathcal{D}$, a \textit{ranking rule} $s$, \textit{i.e.} a measurable mapping $s:\mathbb{R}^q \to \mathfrak{S}_K$, such that the permutation $s(X)$ sorts, with 'high probability', all possible label values $k$ in $\mathcal{Y}$ by decreasing order of the posterior probability $\eta_k(X)$, that is to say in the same order as the permutation $\sigma^*_X$ defined by: $\forall x\in \mathbb{R}^q$,
\begin{equation}\label{eq:opt}
    \eta_{\sigma^{*-1}_x(1)}>\eta_{\sigma^{*-1}_x(2)}>\ldots>\eta_{\sigma^{*-1}_x(K)}.
\end{equation}
Equipped with this notation, observe that $g^*(x)=\sigma_x^{*-1}(1)$ for all $x\in\mathbb{R}^q$. Given a loss function $d:\mathfrak{S}_K\times \mathfrak{S}_K \to \mathbb{R}_+$ (\textit{i.e.} a symmetric measurable mapping s.t. $d(\sigma,\; \sigma)=0$ for all $\sigma \in \mathfrak{S}_K$), one may formulate label ranking as the problem of finding a ranking rule $s$ which minimizes the \textit{ranking risk}
\begin{equation}\label{eq:risk_rank}
\mathcal{R}(s)\overset{def}{=}\mathbb{E}\left[d\left(s(X),\sigma^*_X \right)  \right].
\end{equation}
Except when $K=2$ and in the case when the loss function $d$ considered only measures the capacity of the ranking rule to recover the label that is ranked first, that is to say when $d(\sigma,\; \sigma')=\mathbb{I}\{\sigma^{-1}(1)\neq \sigma'^{-1}(1) \}$ (in this case, $\mathcal{R}(s)=\mathbb{P}\{g^*(X)\neq s(X)^{-1}(1) \}$), the nature of the label ranking problem significantly differs from that of multiclass classification. There is no natural empirical counterpart of the risk \eqref{eq:risk_rank} based on the observations $\mathcal{D}$, which makes the ERM strategy inapplicable in a straightforward fashion. It is the goal of the present paper to show that the label ranking problem can be solved, under appropriate noise conditions, by means of the \textit{One-Versus-One} (OVO) approach to multiclass classification. The learning strategy proposed is directly inspired from recent advances in \textit{consensus ranking} and \textit{ranking median regression} (RMR), see \cite{CKS17} and \cite{CKS18}. In the RMR setup, assigned to the input random vector $X$, one considers an output r.v. $\Sigma$ that takes its values in the group $\mathfrak{S}_K$ (in recommending systems, $\Sigma$ may represent the preferences over a set of items indexed by $k\in\{1,\; \ldots,\; K\}$ of a given user, whose profile is described by the features $X$). The goal is to find a ranking rule $s$ that minimizes $\mathbb{E}[d(s(X),\; \Sigma)]$, that is to say, for any $x\in\mathbb{R}^q$, a \textit{consensus/median ranking} $s(x)\in \mathfrak{S}_K$ related to the conditional distribution of $\Sigma$ given $X=x$ w.r.t. the metric $d(.,\; .)$.
In this paper, by means of a coupling technique we show that the label ranking problem stated above can be viewed as a variant of RMR where the output ranking is very partially observed in the training stage, through the label ranked first solely. Based on this analogy, the main result of the article shows that the OVO method permits to recover the optimal label ranking with high probability, provided that noise conditions are fulfilled for all binary classification subproblems. Incidentally, the analysis carried out provides statistical guarantees in the form of (possibly fast) learning rate bounds for the OVO approach to multiclass classification under the hypotheses stipulated. The theoretical results established in this article are also empirically confirmed by various numerical experiments.

The paper is organized as follows. In section \ref{sec:prel}, the OVO methodology for multiclass classification is recalled at length, together with recent results in RMR. The main results of the article are stated in section \ref{sec:main}: principally, a coupling result connecting label ranking to RMR and statistical guarantees for the OVO approach to label ranking in the form of nonasymptotic probability bounds. Numerical experiments are displayed in section \ref{sec:num}, while some concluding remarks are collected in section \ref{sec:concl}. The proofs are deferred to the Appendix section.

\section{PRELIMINARIES}\label{sec:prel}

As a first go, we recall the OVO approach for defining a multiclass classifier from binary classifiers. Basic hypotheses and results related to Ranking Median Regression (RMR) are next briefly described.

\subsection{From Binary to Multiclass Classification}\label{subsec:binary}
A classifier $g$ is entirely characterized by the collection of subsets of the feature space $\mathcal{X}$: $(S_g(1),\; \ldots,\; S_g(K))$, where $S_g(k)=\{x\in \mathcal{X}:\; g(x)=k  \}$ for $k\in\{1,\; \ldots,\; K  \}$. Observe that the $S_k$'s are pairwise disjoint and their union is equal to $\mathbb{R}^q$. Hence, they form a partition of $\mathbb{R}^q$, except that it may happen that a certain subset $S_k(g)$ is empty, \textit{i.e.} a certain label $k$ is never predicted by $g$. 

{\bf The OVO approach.} Partitioning the feature space $\mathbb{R}^q$ in more than two subsets may lead to practical difficulties and certain learning algorithms such as Support Vector Machines (SVM's) are originally tailored to the binary situation (\textit{i.e.} to the case $K=2$). In this case, a natural way of extending such algorithms, usually referred to as the 'One-Versus-One' approach to multi-class classification is to run it $K(K-1)/2$ times, for each binary subproblem, see \textit{e.g.} \cite{HT98}, \cite{MM98}, \cite{ASS00}, \cite{Furn02} or \cite{WLW04}: for any $1\leq k<l\leq K$, based on the fraction of the training data with labels in $\{k,\; l\}$ only,
$$
\mathcal{D}_{k,l}=\left\{ (X_i,Y_i):\; Y_i\in\{k,l  \},\; i=1,\; \ldots,\; n    \right\},
$$ 
the algorithm outputs a classification rule $g_{k,l}:\mathbb{R}^q\rightarrow \{-1,\; +1 \}$ with risk 
$$
L_{k,l}(g_{k,l})\overset{def}{=}\mathbb{P}\{ Y_{k,l}\neq g_{k,l}(X)\mid Y\in\{k,l \}\},
$$ where $Y_{k,l}=\mathbb{I}\{Y=l \}-\mathbb{I}\{Y=k \}$, as small as possible and combine, for any possible input value $x\in \mathbb{R}^q$, the binary predictions $g_{k,l}(x)$ so as to produce a multi-class classifier $\bar{g}:\mathbb{R}^q\to \{1,\; \ldots,\; K\}$ with minimum risk $L(\bar{g})$. A possible fashion of combining the results of the $K(K-1)/2$ 'duels' is to take as predicted label which has won the largest number of duels (and stipulate a rule for breaking possible ties).
The rationale behind this OVO approach lies in the fact that
\begin{equation}\label{eq:1vs1_opt}
g^*(x)=\argmax_{k\in\{1,\; \ldots,\; K  \}}N^*_k(x),
\end{equation}
where, for all $(k,x)\in\{1,\; \ldots,\; K\}\times \mathbb{R}^q$, $N^*_k(x)$
denotes the number of duels won by label $k$ with optimal/Bayes
classifiers for all binary subproblems, namely
$$
N^*_k(x)=\sum_{l<k}\mathbb{I}\{ g_{l,k}^*(x)=+1\}+ \sum_{k<l}\mathbb{I}\{ g_{k,l}^*(x)=-1\} ,
$$
where $g^*_{l,m}(x)=2\mathbb{I}\{\eta_{m}(x)/(\eta_m(x)+\eta_l(x))>1/2 \}-1$ is the minimizer of the risk $L_{l,m}$ for $l<m$. The proof is straightforward. Indeed, it suffices to observe that, for all $i\in\{1,\; \ldots,\; K\}$,
$
N^*_{\sigma^*_x(i)}=K-i.
$
\begin{remark}{\sc (One-Versus-All)} An alternative to the OVO approach in order to reduce multiclass classification to binary subproblems and apply the SVM methodology consists in comparing each class to all of the others in $K$ two-class
duels. A test point is classified as follows: the signed
distances from each of the $K$ separating hyperplanes are computed, the winner being simply the
class corresponding to the largest signed distance. However, other rules have been proposed in \cite{Vapnik98} and in \cite{WestonWatkins99}. 
\end{remark}

\noindent {\bf Label Ranking.}  As underlined in the Introduction section, rather than learning to predict the likeliest label given $X$, it may also be desirable to rank all possible labels according to their conditional likelihood. The goal is then to recover the permutation $\sigma^*_X$ defined through \eqref{eq:opt}.
Practically, this boils down to build a predictive rule $s(x)$ from the training data $(X_1,Y_1),\; \ldots,\; (X_n,Y_n)$ that maps $\mathbb{R}^q$ to $\mathfrak{S}_K$ and minimizes the ranking risk \eqref{eq:risk_rank},
where $d(.,\; .)$ is an appropriate loss function defined on $\mathfrak{S}_K\times \mathfrak{S}_K$. For instance, one may consider $\mathbb{I}\{\sigma\neq \sigma'\}$ or the Hamming distance $\sum_{k=1}^K\mathbb{I}\{\sigma(k)\neq \sigma'(k)\}$ to measure the dissimilarity between two permutations $\sigma$ and $\sigma'$ in $\mathfrak{S}_K$. Classic metrics on $\mathfrak{S}_K$ (see \cite{DH98}) also provide natural choices for the loss function, including
\begin{itemize}
\item the Kendall $\tau$ distance: $\forall (\sigma,\; \sigma')\in \mathfrak{S}_K^2$,
$$
d_{\tau}(\sigma,\; \sigma')=\sum_{i<j}\mathbb{I}\{(\sigma(i)-\sigma(j))\cdot (\sigma'(i)-\sigma'(j))<0\};
$$
\item the Spearman footrule: $\forall (\sigma,\; \sigma')\in \mathfrak{S}_K^2$,
$$
d_1(\sigma,\; \sigma')=\sum_{i=1}^K\left\vert\sigma(i)-\sigma'(i)\right\vert;
$$
\item the Spearman $\rho$ distance: $\forall (\sigma,\; \sigma')\in \mathfrak{S}_K^2$,
$$
d_{2}(\sigma,\; \sigma')=\sum_{i=1}^K(\sigma(i)-\sigma'(i))^2.
$$
\end{itemize}
As shall be explained below, the label ranking problem can be viewed as a variant of the standard ranking median regression problem.
 
\subsection{Ranking Median Regression}
 This problem of minimizing \eqref{eq:risk_rank} shares some similarity with that referred to as \textit{ranking median regression} in \cite{CKS18}, also called \textit{label ranking} sometimes, see \textit{e.g.} \cite{tsoumakas2009mining} and \cite{vembu2010label}. In this supervised learning problem, the output associated with the input variable $X$ is a random vector $\Sigma$ taking its values in $\mathfrak{S}_K$ (expressing the preferences on a set of items indexed by $k\in\{1,\; \ldots,\; K\}$ of a user with a profile characterized by $X$  drawn at random in a certain statistical population) and the goal pursued is to learn from independent copies $(X_1,\Sigma_1),\; \ldots,\; (X_n,\Sigma_n)$ of the pair $(X,\Sigma)$ a (measurable) ranking rule $s:\mathcal{X}\to \mathfrak{S}_K$ that nearly minimizes 
 \begin{equation}\label{eq:RMR_risk}
     R(s)=\mathbb{E}[d(\Sigma,s(X))].
 \end{equation}
 The name \textit{ranking median regression} arises from the fact that any rule mapping $X$ to a median of $\Sigma$'s conditional distribution given $X$ w.r.t. the metric/loss $d$ (refer to \cite{CKS17} for a statistical learning formulation of the consensus/median ranking problem) is a minimizer of \eqref{eq:RMR_risk}, see Proposition 5 in \cite{CKS18}. In certain situations, the minimizer of \eqref{eq:RMR_risk} is unique and a closed analytic form can be given for the latter, based on the pairwise probabilities: $p_{i,j}(x)=\mathbb{P}\{ \Sigma(i)<\Sigma(j) \mid X=x \}:=1-p_{j,i}(x)$ for $1\leq i<j\leq K$ and $x\in\mathbb{R}^q$.
 \begin{assumption}\label{hyp:SST}
For all $x\in \mathbb{R}^q$, we have: $\forall (i,k,l)\in\{1,\; \ldots,\; K  \}^3$, $p_{i,j}(x)\neq 1/2$ and
 \begin{equation}\label{eq:SST}
p_{i,j}(x)>1/2 \text{ and } p_{j,k}(x)>1/2 \; \Rightarrow \; p_{i,k}(x)>1/2.
 \end{equation}
 \end{assumption}
 Indeed, when choosing the Kendall $\tau$ distance $d_{\tau}$ as loss function,
 it has been shown that, under Assumption \ref{hyp:SST}, referred to as
 \textit{strict stochastic transitivity}, the minimizer of \eqref{eq:RMR_risk}
 is almost-surely unique and given by: $\forall k\in\{1,\; \ldots,\; K\}$,
 with probability one:
 \begin{equation}\label{eq:opt_RMR}
     s_X^*(k)=1+\sum_{l\neq k}\mathbb{I}\{p_{k,l}(X)<1/2 \}.
 \end{equation}
 
\begin{remark}{\sc (Conditional BTLP model)} A Bradley-Terry-Luce-Plackett model for $\Sigma$'s conditional distribution given $X$, $P_{\Sigma\mid X}$, assumes the existence of a hidden preference vector $w(X)=(w_1(X),\; \ldots,\; w_K(X))$, where $w_k(X)>0$ is interpreted as a preference score for item $k$ of a user with profile $X$, see \textit{e.g.} \cite{BT52}, \cite{Luce59} or \cite{Plackett75}. The conditional distribution of $\Sigma^{-1}$ given $X$ can be defined sequentially as follows: $\Sigma^{-1}(1)$ is distributed according to a multinomial distribution of size $1$ with support $\mathbf{S}_1=\{1,\; \ldots,\; K\}$ and parameters $w_k(X)/\sum_l w_l(X)$ and, for $k>1$, $\Sigma^{-1}(k)$ is distributed according to a multinomial distribution of size $1$ with support $\mathbf{S}_k=\mathbf{S}_1\setminus\{\Sigma^{-1}(1),\; \ldots,\; \Sigma^{-1}(k-1)\}$ with parameters $w_l(X)/\sum_{m\in \mathbf{S}_k}w_m(X)$, $l\in \mathbf{S}_k$.  The conditional pairwise probabilities are given by $p_{k,l}(X)=w_k(X)/(w_k(X)+w_l(X))$ and one may easily check that Assumption \ref{hyp:SST} is fulfilled as soon as the $w_k(X)$'s are pairwise distinct with probability one. In this case, $s^*(X)$ is the permutation that sorts the $w_k(X)$'s in decreasing order.
\end{remark} 
 In \cite{CKS18}, certain situations where empirical risk minimizers over classes of ranking rules  fulfilling appropriate complexity assumptions can be proved to achieve fast learning rates (\textit{i.e.} faster than $O_{\mathbb{P}}(1/\sqrt{n})$) have been investigated. More precisely, denoting by ${\rm ess}\inf Z$ the essential infimum of any real valued r.v. $Z$, the following 'noise condition' related to conditional pairwise probabilities was considered.
  \begin{assumption}\label{hyp:noise_RMR} We have:
   \begin{equation}
  H={\rm ess} \inf \min_{i<j}\left\vert p_{i,j}(X)-1/2 \right\vert>0.
   \end{equation}
  \end{assumption}
  Precisely, it is shown in \cite{CKS18} (see Proposition 7 therein) that,
  under Assumptions \ref{hyp:SST}-\ref{hyp:noise_RMR}, minimizers of the
  empirical version of  \eqref{eq:RMR_risk} over a {\sc VC} major class of
  ranking rules with the Kendall $\tau$ distance as loss function achieves a
  learning rate bound of order $O_{\mathbb{P}}(1/n)$ (without the impact of
  model bias). Since $\mathbb{P}_X\{s(X)\neq s^*_X\}\leq (1/H)\times
      (R(s)-R(s^*_.))$ (\textit{cf} Eq. $(13)$ in \cite{CKS18}), a bound for
      the probability that the empirical risk minimizer differs from the
      optimal ranking rule at a random point $X$ can be immediately derived.

\section{LABEL RANKING}\label{sec:main}

We now describe at length the connection between label ranking and RMR and state the main results of the article.

\subsection{Label Ranking as RMR}\label{subsec:link}
The major difference with label ranking in the multi-class classification context lies in the fact that only the partial information $\sigma_X^{*-1}(1)$ is observable in presence of noise, under the form of the random label $Y$ assigned to $X$ ($\sigma_X^{*-1}(1)$ being the mode of $Y$'s conditional distribution given $X$), in order to mimic the optimal rule $\sigma^*_X$.

\begin{lemma}\label{lem}
Let $(X,Y)$ be a random pair on the probability space $(\Omega,\; \mathcal{F}.\; \mathbb{P})$. One may extend the sample space so as to build a random variable $\Sigma$ that takes its values in $\mathfrak{S}_K$ and whose conditional distribution given $X$ is a BTLP model with preference vector $\eta(X)=(\eta_1(X),\; \ldots,\; \eta_K(X))$ such that
\begin{equation} \label{eq:partial}
Y=\Sigma^{-1}(1) \text{ with probability one}.
\end{equation}
\end{lemma}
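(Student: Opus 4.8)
The plan is to exploit a simple but crucial normalization coincidence and then to fill in the remaining ranks by auxiliary randomness, in the spirit of a quantile (inverse-CDF) coupling.

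First I would record the defining property of the BTLP model with preference vector $\eta(X)$ recalled in the Remark above: its top-ranked item satisfies $\mathbb{P}\{\Sigma^{-1}(1)=k\mid X\}=\eta_k(X)/\sum_{l=1}^K\eta_l(X)$. The key observation is that, because the $\eta_l(X)$'s are posterior probabilities, $\sum_{l=1}^K\eta_l(X)=1$ almost surely, so that $\mathbb{P}\{\Sigma^{-1}(1)=k\mid X\}=\eta_k(X)=\mathbb{P}\{Y=k\mid X\}$. In other words, the conditional law of the label ranked first in the BTLP model coincides exactly with the conditional law of $Y$ given $X$; this is precisely what makes the identification $Y=\Sigma^{-1}(1)$ possible at all.

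Second, I would construct $\Sigma$ explicitly on an enlarged probability space. Introduce auxiliary variables $U_2,\ldots,U_K$, i.i.d.\ uniform on $[0,1]$ and independent of $(X,Y)$, defined on a product extension of $(\Omega,\mathcal{F},\mathbb{P})$. Set $\Sigma^{-1}(1)=Y$, and for $k=2,\ldots,K$ define $\Sigma^{-1}(k)$ recursively as the measurable function of $(X,\Sigma^{-1}(1),\ldots,\Sigma^{-1}(k-1),U_k)$ obtained by the quantile transform: partition $[0,1]$ into consecutive intervals whose lengths are the weights $\eta_l(X)/\sum_{m\in\mathbf{S}_k}\eta_m(X)$, $l\in\mathbf{S}_k$, where $\mathbf{S}_k=\{1,\ldots,K\}\setminus\{\Sigma^{-1}(1),\ldots,\Sigma^{-1}(k-1)\}$, and let $\Sigma^{-1}(k)$ be the index of the interval containing $U_k$. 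Conditionally on $X$ and on the previously drawn ranks, $\Sigma^{-1}(k)$ then has exactly the prescribed multinomial distribution on $\mathbf{S}_k$, while by construction $\Sigma$ is a bona fide permutation with $Y=\Sigma^{-1}(1)$ holding surely.

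Third, I would verify that the resulting $\Sigma$ has conditional law BTLP$(\eta(X))$. Using the independence of the $U_k$'s from $(X,Y)$ together with the chain rule, for any $(\sigma_1,\ldots,\sigma_K)$ one obtains
\[
\mathbb{P}\{\Sigma^{-1}=(\sigma_1,\ldots,\sigma_K)\mid X\}=\mathbb{P}\{Y=\sigma_1\mid X\}\prod_{k=2}^K\frac{\eta_{\sigma_k}(X)}{\sum_{m\in\mathbf{S}_k}\eta_m(X)}.
\]
By the first step, $\mathbb{P}\{Y=\sigma_1\mid X\}=\eta_{\sigma_1}(X)/\sum_{l}\eta_l(X)$, which is exactly the first factor of the BTLP product, so the right-hand side is precisely the BTLP probability of $(\sigma_1,\ldots,\sigma_K)$ with preference vector $\eta(X)$, as required. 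The construction itself is routine; the only genuinely substantive point is the normalization observation of the first step, which is what lets the observed label be identified with the BTLP top rank. The remaining care is purely measure-theoretic, namely checking that the quantile transforms are jointly measurable and that the product extension carries all objects simultaneously, both of which are standard.
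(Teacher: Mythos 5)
Your proposal is correct and follows essentially the same route as the paper's proof: set $\Sigma^{-1}(1)=Y$ and fill in the remaining ranks sequentially as a BTLP model on the residual item set with the same preference weights $\eta_k(X)$. The paper states this construction in three terse lines, while you additionally make explicit the two points it leaves implicit --- the extension of the probability space via auxiliary uniforms and the normalization $\sum_l \eta_l(X)=1$ that identifies $\mathbb{P}\{Y=\sigma_1\mid X\}$ with the BTLP first factor --- which is a welcome but not substantively different elaboration.
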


See the Appendix section for the technical proof. The noteworthy fact that the probabilities related to the optimal pairwise comparisons 
$\mathbb{P}\{g^*_{k,l}(X)=+1\mid Y\in\{k,\; l\}\}=\eta_k(X)/(\eta_k(X)+\eta_l(X))$ are given by a BTLP model has been pointed out in \cite{HT98}. With the notations introduced in Lemma \ref{lem}, we have in addition
\begin{eqnarray*}
\mathbb{P}\left\{ \Sigma(k)<\Sigma(l)\mid X \right\}&=&\eta_k(X)/(\eta_k(X)+\eta_l(X)) , \\
&:=& \eta_{k,l} (X) .
\end{eqnarray*}
Eq. \eqref{eq:partial} can be interpreted as follows: the label ranking problem as defined in subsection \ref{subsec:binary} can be viewed as a specific RMR problem under strict stochastic transitivity (\textit{i.e.} Assumption \ref{hyp:SST} is always fulfilled) with \textit{incomplete observations}
$$
\left(X_1,\; \Sigma_1^{-1}(1)\right),\; \ldots,\; \left(X_n,\; \Sigma_n^{-1}(1))\right).$$
Due to the incomplete character of the training data, one cannot recover the optimal ranking rule $\sigma^*_x$ by minimizing a statistical version of \eqref{eq:RMR_risk} of course. As an alternative, one may attempt to build directly an empirical version of $\sigma^*_X$ based on the explicit form \eqref{eq:opt_RMR}, which only involves pairwise comparisons, in a similar manner as in \cite{CKS17} for consensus ranking. Indeed, in the specific RMR problem under study, Eq. \eqref{eq:opt_RMR} becomes
\begin{equation}\label{eq:Copeland_form}
\sigma^*_X(k)=1+\sum_{l\neq k}\mathbb{I}\{g^*_{k,l}(X)=-1\},
\end{equation}
 for  all $k\in \{1,\; \ldots,\; K\}$. The OVO procedure precisely permits to construct such an empirical version. As shall be shown by the subsequent analysis, in spite of the very partial nature of the statistical information at disposal, the OVO approach permits to recover the optimal RMR rule $\sigma^*_X$ with high probability provided that $(X,\; \Sigma)$ fulfills (a possibly weakened version of) Assumption \ref{hyp:noise_RMR}, combined with classic complexity conditions.
 Using \cite{NIPS2018_8114} or \cite{ECML_PKDD19_LabelRanking}, one can tackle RMR with partial information,
 but lacks theoretical guarantees. 

\begin{remark}{\sc (On the noise condition)} Attention should be paid to the fact that, when applied to the random pair $(X,\Sigma)$ defined in Lemma \ref{lem}, Assumption \ref{hyp:noise_RMR} simply means that the classic Massart's noise condition is fulfilled for every binary classification subproblem, see \cite{MasNed06}.
\end{remark}

\subsection{The OVO Approach to Label Ranking}\label{subsec:OVO}
Let $\mathcal{G}$ be a class of decision rules $g:\mathbb{R}^q\to \{-1,\; +1\}$.
As stated in subsection \ref{subsec:binary}, the OVO approach to multiclass classification is implemented as follows. For all $k<l$, compute a minimizer $\widehat{g}_{k,l}$ of the empirical risk
\begin{equation}\label{eq:emp_risk_bin}
\widehat{L}_{k,l}(g)=\frac{1}{n_k+n_l}\sum_{i:\; Y_i\in\{k,\; l\}}\mathbb{I}\{g(X_i)\neq Y_{k,l,i}\}
\end{equation} 
over class $\mathcal{G}$, with $Y_{k,l,i}=\mathbb{I}\{Y_i=l\}-\mathbb{I}\{Y_i=k\}$ for $i\in\{1,\; \ldots,\; n\}$ and the convention that $0/0=0$. We set $\widehat{g}_{l,k}=-\widehat{g}_{k,l}$ for $k<l$ by convention. Equipped with these $\binom{K}{2}$ classifiers, for any test (\textit{i.e.} input and unlabeled) observation $X$, the $\widehat{g}_{k,l}(X)$'s define a \textit{complete directed graph} $G_X$ with the $K$ labels as vertices: $\forall k<l$, $l\to_X k$ if $\widehat{g}_{k,l}(X)=+1$ and $k\to_X l$ otherwise. The analysis carried out in the next subsection shows that under appropriate noise conditions, with large probability, the random graph $G_X$ is \textit{acyclic}, meaning that the complete binary relation $l\to_X k$ is transitive (\textit{i.e.} $l\to k$ and $k\to_X m$ $\Rightarrow$ $l\to_X m$), in other words that the \textit{scoring function}
\begin{multline}\label{eq:Copeland}
\widehat{s}(X)(k)=1+\sum_{k\neq l}\mathbb{I}\left\{\widehat{g}_{k,l}(X)=-1\right\},\\
=1+\sum_{k\neq l}\mathbb{I}\left\{k \to_X l\right\},
\text{ for } k\in\{1,\; \ldots,\; K\}
\end{multline}
defines a permutation, which, in addition, coincides with $\sigma^*_X$, \textit{cf} Eq. \eqref{eq:Copeland_form}. The equivalence between the transitivity of $\to_X$, the acyclicity of $G_X$ and the membership of $\widehat{s}(X)$ in $\mathfrak{S}_K$ is straightforward, details are left to the reader (see \textit{e.g.} the argument of Theorem 5's proof in \cite{CKS18}). The quantity \eqref{eq:Copeland} can be related to the Copeland score, see \cite{Copeland51}: the score $\widehat{s}(X)(k)$ of label $k$ being equal to $1$ plus the number of duels it has lost, while its Copeland score $C_X(k)$ is its number of victories minus its number of defeats, so that \begin{equation}
\widehat{s}(X)(.) =\left(K+1-C_X(.)\right)/2.
\end{equation} 

\begin{figure}[!h]
	
	\begin{center}
		\fbox{
			\begin{minipage}[t]{7.5cm}
				\begin{center}
					{\sc OVO Approach to Label Ranking}
				\end{center}
				
				{\small
					
					\begin{enumerate}
						\item[] {\bf Inputs.} Class $\mathcal{G}$ of classifier candidates. Training classification dataset $\mathcal{D}=\{(X_1,Y_1),\; \ldots,\, (X_n,Y_n)  \}$. Query point $x\in \mathbb{R}^q$.
						\item ({\sc Binary classifiers.}) For $k<l$, based on $\mathcal{D}_{k,l}=\{(X_i,Y_i):\; Y_i\in\{k,\; l\},\; i=1,\; \ldots,\; n\}$, compute the ERM solution to the binary classification problem:
						$$
						\widehat{g}_{k,l}=\argmin_{g\in \mathcal{G}}\widehat{L}_{k,l}(g).
						$$
						
						\item ({\sc Scoring.}) Compute the  predictions $\widehat{g}_{k,l}(x)$ and the score for the query point $x$:
						$$
						\widehat{s}(x)(k)=1+\sum_{l\neq k}\mathbb{I}\{\widehat{g}_{k,l}(x)=-1 \}.
						$$
						
						\item[] {\bf Output.} Break arbitrarily possible ties in order to get a prediction $	\widehat{\sigma}_{x}$ in $\mathfrak{S}_K$ at $x$ from $\widehat{s}(x)$.
					\end{enumerate}
				}
			\end{minipage}
		}
		\caption{Pseudo-code for 'OVO label ranking'}\label{fig:trpseudoknn}
	\end{center}
\end{figure}

When $G_X$ is not transitive, or equivalently when $\widehat{s}(X)\notin
\mathfrak{S}_K$, one may build a ranking $\widehat{\sigma}_X$ from the scoring
function \eqref{eq:Copeland} by breaking ties in an arbitrary fashion, as
proposed below for simplicity. Alternatives could be considered of course.
The issue of building a ranking/permutation of the labels in $\{1,\; \ldots,\; K
\}$ from \eqref{eq:Copeland} can be connected with the \textit{feedback set}
problem for directed graphs, see \textit{e.g.} \cite{DETT99}: for a directed
graph, a minimal feedback arcset is a set of edges of smallest cardinality such
that a directed acyclic graph is obtained when reversing the edges in it. Refer
to \textit{e.g.} \cite{Festa1999} for algorithms.

\subsection{Statistical Guarantees for Label Ranking}
It is the purpose of the subsequent analysis to show that, provided that the conditions listed below
are fulfilled, the ranking rule $\sigma^*_X$ can be fully recovered through the OVO approach previously described with high probability. We denote by $\mu$ the marginal distribution of the input variable $X$, by $\mu_k$ the conditional distribution of $X$ given $Y=k$ and set $p_k=\mathbb{P}\{Y=k\}$ for $k\in\{1,\; \ldots,\; K  \}$.
\begin{assumption}\label{hyp:noise} There exists $\alpha\in[0,1]$ and $B>0$ such that: for all $k<l$ and $t\geq 0$,
$$
\mathbb{P}\left\{\vert 2 \eta_{k,l}(X)- 1 \vert <t  \right\}
\leq Bt^{\frac{\alpha}{1-\alpha}}.
$$
\end{assumption}
\begin{assumption}\label{hyp:comp}
The class $\mathcal{G}$ is of finite {\sc VC} dimension $V<+\infty$.
\end{assumption}
\begin{assumption}\label{hyp:min} There exists a constant $\varepsilon>0$, s.t. for all $k\neq l$ in $\{1,\; \ldots,\; K  \}$ and $x\in \X$, $ \eta_k(x)+\eta_l(x)>\varepsilon$.
\end{assumption}
Assumption \ref{hyp:noise} means that Assumption \ref{hyp:noise_RMR} is satisfied by the random pair $(X,\Sigma)$ defined in Lemma \ref{lem} in the case $\alpha=1$ (notice incidentally that it is void when $\alpha=0$) and reduces to the classic Mammen-Tsybakov noise condition in the binary case $K=2$, see \cite{Mammen}. The following result provides nonasymptotic bounds for the ranking risk $\mathcal{R}_P(\widehat{\sigma}_X)$ of the OVO ranking rule in the case where the loss function is $\mathbb{I}\{\sigma\neq \sigma'  \}$, \textit{i.e.} for the probability of error. Extension to any other loss function $d(.,\; .)$ is straightforward, insofar as we obviously have $d(\sigma^*_X,\; \widehat{\sigma}_X)\leq \max_{(\sigma,\sigma')\in \mathfrak{S}_K^2}d(\sigma,\sigma')\times \mathbb{I}\{\widehat{\sigma}_X \neq \sigma^*_X \}$ with probability one.

\begin{theorem}\label{thm:main}
    Suppose that Assumptions \ref{hyp:noise}-\ref{hyp:min} are fulfilled. Then, for all $\delta\in (0,1)$, we have with probability (\textit{w.p.}) at least $1-\delta$: $\forall n\geq n_0(\delta, \alpha,\epsilon, B, V)$,
\begin{multline*}
\mathbb{P}\left\{ \widehat{\sigma}_X\neq \sigma^*_X \mid \mathcal{D}\right\}\leq \\ 
\frac{\beta}{\varepsilon}\left\{\binom{K}{2} r_n^{\alpha}\left( \frac{\delta}{\binom{K}{2}} \right) +\sum_{k<l} 2  \left( \inf_{g\in \mathcal{G}}L_{k,l}(g)-L_{k,l}^*  \right)^{\alpha}      \right\},
\end{multline*}
where $X$ denotes a r.v. drawn from $\mu$, independent from the training data $\mathcal{D}$, $L^*_{k,l}=L_{k,l}(g^*_{k,l})$, $\beta = \beta(\alpha, B)$ and with $h:=h(B, a, \epsilon)$,
\begin{multline*}
r_n(\delta)=
2  \left( 1/(nh) \right)^{\frac{1}{2-\alpha}} \times \\
	    \left[ \left(64 C^2 V\log n\right)^{\frac{1}{2-\alpha}}
	    + \left( 32 \log(2/\delta) \right)^{\frac{1}{2-\alpha}} \right].
\end{multline*}
\end{theorem}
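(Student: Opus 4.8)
The plan is to reduce the event $\{\widehat\sigma_X \neq \sigma^*_X\}$ to a union of pairwise disagreement events, convert each such event into a binary excess-risk quantity by means of the noise condition, and then control those excess risks through a fast-rate empirical risk minimization bound. For the reduction, I would compare the two Copeland-type scores \eqref{eq:Copeland} and \eqref{eq:Copeland_form}: if $\widehat g_{k,l}(X) = g^*_{k,l}(X)$ for every pair $k<l$ (equivalently for every ordered pair, given the convention $\widehat g_{l,k} = -\widehat g_{k,l}$ and $g^*_{l,k}=-g^*_{k,l}$), then $\widehat s(X)(k) = \sigma^*_X(k)$ for all $k$; since $\sigma^*_X$ is a genuine permutation its values are distinct, so $\widehat s(X)$ carries no ties, the tie-breaking step is vacuous, and $\widehat\sigma_X = \sigma^*_X$. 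Passing to complements and applying a union bound gives, for \emph{every} realization of $\mathcal D$,
\begin{equation*}
\mathbb{P}\{\widehat\sigma_X \neq \sigma^*_X \mid \mathcal D\} \le \sum_{k<l}\mathbb{P}\{\widehat g_{k,l}(X) \neq g^*_{k,l}(X) \mid \mathcal D\},
\end{equation*}
the remaining randomness being that of a fresh draw $X\sim\mu$.

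Next I would convert each pairwise $\mu$-disagreement probability into a binary excess risk. Fix $k<l$ and abbreviate $\widehat g = \widehat g_{k,l}$, $g^* = g^*_{k,l}$. Splitting on the margin $|2\eta_{k,l}(X)-1|$ at a threshold $t$, bounding the small-margin part with Assumption~\ref{hyp:noise} and the large-margin part through $\mathbb{I}\{\widehat g \neq g^*\} \le t^{-1}|2\eta_{k,l}(X)-1|\,\mathbb{I}\{\widehat g\neq g^*\}$, yields
\begin{equation*}
\mathbb{P}\{\widehat g(X)\neq g^*(X)\} \le B\,t^{\alpha/(1-\alpha)} + \frac1t\,\mathbb{E}_\mu\!\left[|2\eta_{k,l}(X)-1|\,\mathbb{I}\{\widehat g\neq g^*\}\right].
\end{equation*}
To relate the $\mu$-weighted excess risk on the right to the conditional excess risk $L_{k,l}(\widehat g)-L^*_{k,l}$ that the ERM actually controls, I would change measure through the Radon--Nikodym derivative $d\mu/d\mu_{k,l} = (p_k+p_l)/(\eta_k(\cdot)+\eta_l(\cdot)) \le 1/\varepsilon$, finite by Assumption~\ref{hyp:min}, obtaining $\mathbb{E}_\mu[|2\eta_{k,l}-1|\,\mathbb{I}\{\widehat g\neq g^*\}] \le \varepsilon^{-1}(L_{k,l}(\widehat g)-L^*_{k,l})$. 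Optimizing $B t^{\alpha/(1-\alpha)} + (\varepsilon t)^{-1}(L_{k,l}(\widehat g)-L^*_{k,l})$ over $t$ then gives, for a constant $\beta=\beta(\alpha,B)$,
\begin{equation*}
\mathbb{P}\{\widehat g(X)\neq g^*(X)\} \le \beta\,\varepsilon^{-\alpha}\bigl(L_{k,l}(\widehat g)-L^*_{k,l}\bigr)^{\alpha} \le \beta\,\varepsilon^{-1}\bigl(L_{k,l}(\widehat g)-L^*_{k,l}\bigr)^{\alpha},
\end{equation*}
the last step using $\varepsilon\le 1$ and $\alpha\le 1$. This computation is pointwise in $\mathcal D$, hence contributes no failure probability.

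It remains to bound the binary excess risks with high probability over $\mathcal D$. The same change of measure turns Assumption~\ref{hyp:noise} into a Mammen--Tsybakov margin condition for the conditional problem on $\mu_{k,l}$, with exponent $\alpha/(1-\alpha)$ and a constant of order $B/\varepsilon$; combined with the finite {\sc VC} dimension $V$ (Assumption~\ref{hyp:comp}), subproblem $(k,l)$ is a binary classification problem over a {\sc VC} class under a margin condition. Invoking the standard fast-rate oracle inequality for ERM in this setting — obtained by a localized concentration argument (Bernstein/Talagrand on the excess-loss class, using the variance--expectation relation forced by the margin condition) — gives, with probability at least $1-\delta/\binom{K}{2}$,
\begin{equation*}
L_{k,l}(\widehat g_{k,l}) - L^*_{k,l} \le r_{n_{k,l}}\!\left(\tfrac{\delta}{\binom K2}\right) + 2\bigl(\inf_{g\in\mathcal G}L_{k,l}(g)-L^*_{k,l}\bigr),
\end{equation*}
where $n_{k,l}=n_k+n_l$ is the random count of training points in $\{k,l\}$. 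Since $\mathbb{E}[n_{k,l}]=n(p_k+p_l)>n\varepsilon$ by Assumption~\ref{hyp:min}, a Chernoff bound yields $n_{k,l}\ge n\varepsilon/2$ with probability $1-e^{-cn\varepsilon}$, exceeding $1-\delta/\binom K2$ once $n\ge n_0(\delta,\alpha,\varepsilon,B,V)$; on this event $r_{n_{k,l}} \le r_n$ after absorbing the conditional noise constant $B/\varepsilon$ and the factor $\varepsilon$ from $n_{k,l}\gtrsim n\varepsilon$ into $h=h(B,\alpha,\varepsilon)$. Finally I would assemble the pieces: insert this excess-risk bound into the pairwise estimate, apply $(a+b)^\alpha\le a^\alpha+b^\alpha$ and $2^\alpha\le 2$ to separate the estimation term $r_n^\alpha$ from the approximation term $(\inf_g L_{k,l}(g)-L^*_{k,l})^\alpha$, sum over the $\binom K2$ pairs, and take a union bound over the $\binom K2$ favorable events, producing the stated inequality with overall confidence $1-\delta$.

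The main obstacle is this third step: establishing $r_n$ with the explicit constants displayed while simultaneously handling the data-dependent sample sizes $n_{k,l}$ and the mismatch between the margin condition (assumed on $\mu$) and the law $\mu_{k,l}$ on which each ERM operates. The delicate point is to carry out the localized analysis so that all the $\varepsilon$-dependence — entering both through the conditional noise constant and through the lower bound on $n_{k,l}$ — is packaged into the single constant $h$, and to choose $n_0$ so that every concentration step is simultaneously valid; by contrast, the reduction and the noise-to-excess-risk conversion are elementary and deterministic in $\mathcal D$.
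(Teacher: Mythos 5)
Your proposal is correct and follows the same three-step architecture as the paper's proof: the pointwise inclusion $\bigcap_{k<l}\{\widehat g_{k,l}(X)=g^*_{k,l}(X)\}\subset\{\widehat\sigma_X=\sigma^*_X\}$ plus a union bound over pairs, conversion of each disagreement probability into $\beta\varepsilon^{-1}(L_{k,l}(\widehat g_{k,l})-L^*_{k,l})^\alpha$ via the margin condition and a change of measure under Assumption \ref{hyp:min}, and a localized fast-rate oracle inequality per subproblem at confidence $\delta/\binom{K}{2}$, recombined with $(a+b)^\alpha\le a^\alpha+b^\alpha$. The one genuine divergence is your treatment of the random subsample sizes. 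The paper never conditions on $n_{k,l}$: in the proof of Lemma \ref{lem2} it writes $L_{k,l}(g)=Pc_{k,l}/Ph_{k,l}$ and $\widehat L_{k,l}(g)=P_nc_{k,l}/P_nh_{k,l}$ with the cost $c_{k,l}$ supported on $\{Y\in\{k,l\}\}$, observes that minimizing $\widehat L_{k,l}$ amounts to minimizing $P_nc_{k,l}$ since $P_nh_{k,l}$ does not depend on $g$, and runs the localization (variance control $\mathrm{Var}(f)\le\beta(p_k+p_l)^\alpha(Pf)^\alpha$, localized Rademacher bound for the VC class, then the variational inequality) on the full sample of deterministic size $n$, dividing by $Ph_{k,l}=p_k+p_l\ge\varepsilon$ only at the end; this is what yields the explicit constants $64$, $32$ and $h=\epsilon^{3-2\alpha}(1-\alpha)^{1-\alpha}\alpha^\alpha/B^{1-\alpha}$, with $n_0$ arising solely from balancing the two terms of that variational inequality. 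Your alternative — condition on $n_{k,l}$ (given which the retained points are indeed i.i.d.\ from $\mu_{k,l}$) and apply a Chernoff bound $n_{k,l}\ge n\varepsilon/2$ — is sound but more modular: it costs an extra high-probability event per pair, so you must split the budget $\delta/\binom{K}{2}$ between the oracle inequality and the Chernoff event (as written you spend it twice), and it is this event that inflates your $n_0$. Two minor points in your favor: Assumption \ref{hyp:noise} is stated under $\mu$, and your margin split under $\mu$ followed by $d\mu/d\mu_{k,l}\le 1/\varepsilon$ on the excess-risk side is cleaner than the supplementary's application of the margin condition to a conditional probability without adjusting the constant by $1/(p_k+p_l)$; and since the theorem leaves $\beta(\alpha,B)$ and $h(B,\alpha,\epsilon)$ unspecified, your conclusion — same form, constants absorbed into $h$ (with an $\varepsilon$-exponent $2-\alpha$, in fact marginally sharper than the paper's $3-2\alpha$) — does establish the stated bound.
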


Refer to the Appendix section for the technical proof.
\begin{remark} {\sc (On the noise condition (bis))} We point out that the results of this paper can be straightforwardly extended to the situation where the noise exponent $\alpha$ may vary depending on the binary subproblem considered. For the sake of simplicity only, here we restrict the analysis to the homogeneous setup described by Assumption \ref{hyp:noise}.
\end{remark} Hence, for the RMR problem related to the partially observed BTLP model detailed in subsection \ref{subsec:link}, the rate bound achieved by the OVO ranking rule in Theorem \ref{thm:main} is of order $O_{\mathbb{P}}(n^{-\alpha/(2-\alpha)})$, ignoring the bias term and the logarithmic factors. In the case $\alpha=1$, it is exactly the same rate as that attained by minimizers of the ranking risk in the standard RMR setup, as stated in Proposition 7 in \cite{CKS17}.
Whereas situations where the OVO multi-class classification may possibly lead to 'inconsistencies' (\textit{i.e.} where the binary relationship $\to_X$ is not transitive) have been exhibited many times in the literature, no probability bound for the excess of classification risk of the general OVO classifier, built from ERM applied to all binary subproblems, is documented to the best our knowledge. Hence, attention should be paid to the fact that, as a by-product of the argument of Theorem \ref{thm:main}'s proof, generalization bounds for the OVO classifier 
$$
\bar{g}(X)\overset{def}{=}\widehat{\sigma}_X^{-1}(1).
$$
can be established, as stated in Corollary \ref{cor} below. More generally, the
statistical performance of the label ranking rule $\widehat{\sigma}_x$ produced
by the method described in subsection \ref{subsec:OVO} can be assessed for
other risks. For instance, rather than just comparing the true label $Y$
assigned to $X$ to the label $\widehat{\sigma}_X^{-1}(1)$ ranked first, as in
OVO classification approach, one could consider $\ell_k(Y,\;
\widehat{\sigma}_X)$, with
$
\ell_k(y,\; \sigma)=\mathbb{I}\{y\notin \{\sigma^{-1}(1),\; \ldots,\; \sigma^{-1}(k)\}  \}$
for all $(y,\sigma)\in\{1,\; \ldots,\; K \}\times\mathfrak{S}_K$, equal to $1$ when $Y$ does not appear in the top $k$ list and to $0$ otherwise, where $k$ is fixed in $\{1,\; \ldots,\; K\}$. For any ranking rule $s:\mathbb{R}^q\to \mathfrak{S}_K$, the corresponding risk is then 
\begin{equation}\label{eq:topk_risk}
W_k(s)=\mathbb{E}[\ell_k(Y,\; s(X))  ].
\end{equation}
Set $W_k^*=\min_s W_k(s)$, where the minimum is taken over the set of all possible ranking rules $s$. As shown in the Appendix section, the argument leading to Theorem 10 can be adapted to prove a rate bound for the risk excess of the OVO ranking rule $\sigma^*_.:x\in\mathbb{R}^q\mapsto \sigma^*_x$.

\begin{proposition}\label{prop}
Let $k\in\{1,\; \ldots,\; K\}$ be fixed. Then:
$$
W_k^*=W_k(\sigma^*_.).
$$
Suppose in addition that Assumptions \ref{hyp:noise}-\ref{hyp:min} are fulfilled. Then, for all $\delta\in (0,1)$, we have \textit{w.p.} $\ge 1-\delta$: $\forall n\geq 1$,
\begin{multline*}
W_k(\widehat{\sigma}_.)-W_k^*\leq \frac{\beta}{\varepsilon} \binom{K}{k}k(K-k)\times\\ 
    \left( r_n^{\alpha}\left( \frac{\delta}{\binom{K}{2}} \right)+ 2 \cdot\max_{m\neq l}\left(   \inf_{g\in \mathcal{G}}L_{l,m}(g)-L_{l,m}^* \right)^{\alpha} \right).
\end{multline*}
\end{proposition}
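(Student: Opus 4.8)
The plan is to use that the top-$k$ loss $\ell_k$ depends on a ranking only through its \emph{top-$k$ set}, so that everything reduces to controlling the pairwise comparisons separating a top-$k$ set from its complement, and then to feed those comparisons into the per-subproblem estimates already obtained in the proof of Theorem~\ref{thm:main}. Throughout, for a ranking rule $s$ write $T_s(x)=\{s(x)^{-1}(1),\ldots,s(x)^{-1}(k)\}$, and abbreviate $T^*_x=T_{\sigma^*}(x)$ and $\widehat T_x=T_{\widehat\sigma}(x)$.

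First I would settle the optimality claim $W_k^*=W_k(\sigma^*_\cdot)$. Conditioning on $X=x$ gives $\mathbb{E}[\ell_k(Y,s(x))\mid X=x]=1-\sum_{j\in T_s(x)}\eta_j(x)$, which depends on $s$ only through the $k$-subset $T_s(x)$ and is minimized by taking the $k$ labels with the largest values $\eta_j(x)$. By \eqref{eq:opt} this minimizing set is exactly $T^*_x$ (a.s. unique, since $\eta(X)$ is assumed continuous). Integrating over $X$ shows that $\sigma^*_\cdot$ achieves the pointwise minimum, hence $W_k^*=W_k(\sigma^*_\cdot)$. The same computation shows that the conditional excess risk at $X$ equals $\sum_{j\in T^*_X}\eta_j(X)-\sum_{j\in\widehat T_X}\eta_j(X)$, a quantity lying in $[0,1]$ which vanishes as soon as $\widehat T_X=T^*_X$. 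Consequently it is bounded by $\mathbb{I}\{\widehat T_X\neq T^*_X\}$, and taking expectations yields the reduction
\[
W_k(\widehat\sigma_\cdot)-W_k^*\le \mathbb{P}\{\widehat T_X\neq T^*_X\mid\mathcal{D}\}.
\]

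The combinatorial heart of the argument, and the step I expect to be the main obstacle, is the claim that $\widehat T_X=T^*_X$ whenever every \emph{boundary} comparison is correct, i.e.\ $\widehat g_{a,b}(X)=g^*_{a,b}(X)$ for all $a\in T^*_X$ and $b\notin T^*_X$. Indeed, under this event each $a\in T^*_X$ loses no duel against $(T^*_X)^c$, so by \eqref{eq:Copeland} its score satisfies $\widehat s(X)(a)\le k$, while each $b\notin T^*_X$ loses all $k$ duels against $T^*_X$, so $\widehat s(X)(b)\ge k+1$. Since the scores are integers, $T^*_X$ (scores $\le k$) is strictly separated from its complement (scores $\ge k+1$); the $k$ lowest scores therefore belong to $T^*_X$, and the arbitrary tie-breaking used to build $\widehat\sigma_X$ only permutes labels within a common score class, so the recovered top-$k$ set is $T^*_X$. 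Taking the contrapositive and a union bound over boundary pairs gives, pointwise, $\mathbb{I}\{\widehat T_X\neq T^*_X\}\le \sum_{a\in T^*_X,\,b\notin T^*_X}\mathbb{I}\{\widehat g_{a,b}(X)\neq g^*_{a,b}(X)\}$. To remove the $X$-dependence of $T^*_X$, I would sum over all $\binom{K}{k}$ possible $k$-subsets $T$ and drop the indicator $\mathbb{I}\{T^*_X=T\}\le 1$, obtaining
\[
\mathbb{P}\{\widehat T_X\neq T^*_X\mid\mathcal{D}\}\le \sum_{T}\;\sum_{a\in T,\,b\notin T}\mathbb{P}_X\{\widehat g_{a,b}(X)\neq g^*_{a,b}(X)\mid\mathcal{D}\},
\]
a sum of exactly $\binom{K}{k}\,k(K-k)$ terms.

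Finally I would invoke the per-subproblem control established inside the proof of Theorem~\ref{thm:main}: under Assumptions~\ref{hyp:noise}--\ref{hyp:min}, on one event of probability at least $1-\delta$ (the failure budget split as $\delta/\binom{K}{2}$ across the $\binom{K}{2}$ subproblems), the marginal disagreement probability of every pair is simultaneously bounded by
\[
\frac{\beta}{\varepsilon}\Big(r_n^{\alpha}(\delta/\tbinom{K}{2})+2\max_{l\neq m}\big(\inf_{g\in\mathcal{G}}L_{l,m}(g)-L_{l,m}^*\big)^{\alpha}\Big),
\]
where Assumption~\ref{hyp:min} supplies the factor $1/\varepsilon$ converting the conditional disagreement (on $\{Y\in\{l,m\}\}$) into the marginal one, and the Mammen--Tsybakov exponent of Assumption~\ref{hyp:noise} produces the power $\alpha$. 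Bounding each of the $\binom{K}{k}\,k(K-k)$ terms above by this common quantity and chaining with the reduction yields the announced estimate for $W_k(\widehat\sigma_\cdot)-W_k^*$. The only genuinely new work beyond Theorem~\ref{thm:main} is the score-separation/tie-breaking verification and the bookkeeping that produces the constant $\binom{K}{k}\,k(K-k)$; the deviation and approximation terms are inherited verbatim.
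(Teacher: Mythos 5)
Your proposal is correct and follows essentially the same route as the paper's proof: the same conditional-expectation computation establishing $W_k^*=W_k(\sigma^*_\cdot)$, the same reduction of the excess risk to $\mathbb{P}_X\{\mathrm{Top}_k(\widehat\sigma_X)\neq \mathrm{Top}_k^*(X)\}$, the same decomposition over the $\binom{K}{k}$ candidate top-$k$ sets and their $k(K-k)$ boundary pairs, and the same plug-in of the per-pair bound \eqref{bound_bin} with budget $\delta/\binom{K}{2}$. The only difference is that you spell out the score-separation and tie-breaking argument (scores $\le k$ inside $T_X^*$ versus $\ge k+1$ outside) that the paper leaves implicit when passing from the event $\{\mathrm{Top}_k^*(X)=\mathcal{L},\ \mathrm{Top}_k^*(X)\neq\mathrm{Top}_k(\widehat\sigma_X)\}$ to the union of boundary disagreements, which is a welcome clarification rather than a deviation.
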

Since we have $W_1(s)=L(s(.)^{-1}(1))$ for any label ranking rule $s(x)$, in the case $k=1$ the result above provides a generalization bound for the excess of misclassification risk of the OVO classifier $\bar{g}(x)=\widehat{\sigma}_x^{-1}(1)$.

\begin{corollary}\label{cor}
    Suppose that Assumptions \ref{hyp:noise}-\ref{hyp:min} are fulfilled. Then, for all $\delta\in (0,1)$, we have \textit{w.p.} $\ge 1-\delta$: 
$\forall n\geq n_0(\delta, \alpha,\epsilon, B, V)$,
\begin{multline*}
    L(\bar{g})-L^*\leq \frac{\beta}{\varepsilon} K(K-1)\times\\ \left( r_n^{\alpha} \left (\frac{\delta}{\binom{K}{2}} \right )+2 \cdot \max_{k\neq l}\left(   \inf_{g\in \mathcal{G}}L_{k,l}(g)-L_{k,l}^* \right)^{\alpha} \right).
\end{multline*}
\end{corollary}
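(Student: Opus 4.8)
The plan is to obtain the corollary as the special case $k=1$ of Proposition~\ref{prop}: once the top-$1$ ranking risk $W_1$ is identified with the misclassification risk $L$, the whole statement follows by substituting $k=1$ into the constants appearing there. First I would unfold the definition of the top-$k$ loss at $k=1$. Since $\ell_1(y,\sigma)=\mathbb{I}\{y\notin\{\sigma^{-1}(1)\}\}=\mathbb{I}\{y\neq\sigma^{-1}(1)\}$, for any ranking rule $s:\mathbb{R}^q\to\mathfrak{S}_K$ one has
$$
W_1(s)=\mathbb{E}\!\left[\mathbb{I}\{Y\neq s(X)^{-1}(1)\}\right]=\mathbb{P}\{Y\neq s(X)^{-1}(1)\}=L\!\left(s(\cdot)^{-1}(1)\right),
$$
which is exactly the identity announced just before the statement.

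Next I would specialize this identity to the two rules of interest. For the empirical rule, recalling $\bar{g}(x)=\widehat{\sigma}_x^{-1}(1)$, this gives $W_1(\widehat{\sigma}_.)=L(\bar{g})$. For the Bayes rule, using $g^*(x)=\sigma_x^{*-1}(1)$ (as noted just below Eq.~\eqref{eq:opt}), this gives $W_1(\sigma^*_.)=L(g^*)=L^*$. The first assertion of Proposition~\ref{prop} states that $\sigma^*_.$ minimizes $W_k$ for each fixed $k$; at $k=1$ this reads $W_1^*=W_1(\sigma^*_.)=L^*$. Combining the two displays yields the exact identification of excess risks
$$
L(\bar{g})-L^*=W_1(\widehat{\sigma}_.)-W_1^*.
$$

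It then remains to transport the quantitative bound of Proposition~\ref{prop} through this equality. Setting $k=1$, the combinatorial prefactor $\binom{K}{k}k(K-k)$ becomes $\binom{K}{1}\cdot 1\cdot(K-1)=K(K-1)$, while the statistical term $r_n^{\alpha}(\delta/\binom{K}{2})$ and the approximation term $2\max_{k\neq l}(\inf_{g\in\mathcal{G}}L_{k,l}(g)-L_{k,l}^*)^{\alpha}$ are carried over verbatim; the confidence level $1-\delta$ and the range of $n$ are inherited from the same high-probability event on $\mathcal{D}$. Substituting these into the proposition produces precisely the stated inequality. There is no genuine obstacle in this argument: the corollary is a direct specialization, and the only points requiring any care are the collapse of the top-$1$ loss to the $0/1$ classification loss and the matching identification $W_1^*=L^*$ of the two optima, both of which are immediate once the definitions of $\ell_1$, $\bar{g}$ and $g^*$ are written out explicitly.
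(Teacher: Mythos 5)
Your proof is correct and takes essentially the same route as the paper: the authors obtain the corollary as the $k=1$ specialization of Proposition \ref{prop}, invoking the identity $W_1(s)=L(s(\cdot)^{-1}(1))$ stated just before the corollary, with the prefactor $\binom{K}{1}\cdot 1\cdot (K-1)=K(K-1)$. Your explicit verification that $W_1^*=W_1(\sigma^*_.)=L(g^*)=L^*$ (via $g^*(x)=\sigma^{*-1}_x(1)$ and $\bar{g}(x)=\widehat{\sigma}_x^{-1}(1)$) merely spells out a step the paper leaves implicit.
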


\section{EXPERIMENTAL RESULTS}\label{sec:num}
This section first illustrates the results of Theorem \ref{thm:main}  using simulated datasets, of which distributions
satisfy Assumption \ref{hyp:noise},  for certain values of the noise parameter $\alpha$, highlighting the impact/relevance of this condition. In the experiments based on real data next displayed, the OVO
approach to top-$k$ classification, \textit{cf} Eq. \eqref{eq:topk_risk}, is shown to surpass rankings relying on the scores output by multiclass classification algorithms.
Due to space limitations, details and comments are postponed to the Supplementary Material.

\noindent {\bf Synthetic data.}
In this toy illustrative example, we consider $\X = [0,1]$, $K=8$ and learn a
simple decision stump, \textit{i.e.} a function of the form $ x \mapsto
2\I\left \{ (x-s)\epsilon \ge 0 \right \} - 1$ where $s, \epsilon$ are unknown
parameters. A representation of the $\eta_k$'s for all $k \in \{ 1, \dots, K
\}$ as well as the expected Kendall $\tau$ distance of OVO label ranking models
for different values of $n$ are given in Fig. \ref{fig:learning-in-n-dist}.
For each value of $n$, the boxplot is computed using $100$ independent trials, representing different learning rates, for $\alpha = 0.2$ and $\alpha = 0.8$ namely.

\begin{figure}[h!] 
    \centering 
\includegraphics[width=0.52\linewidth]{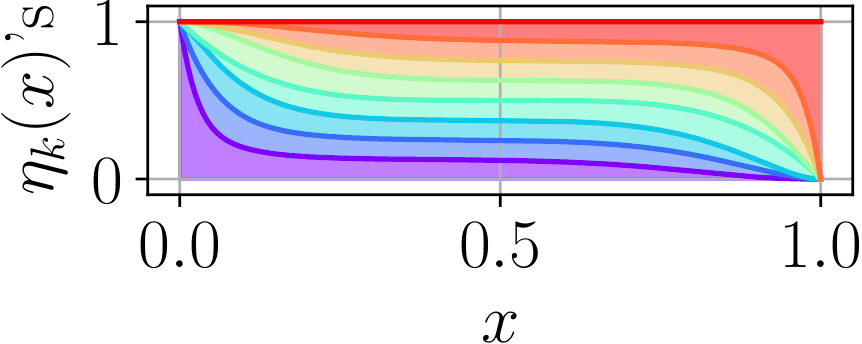}
    \includegraphics[width=0.46\linewidth]{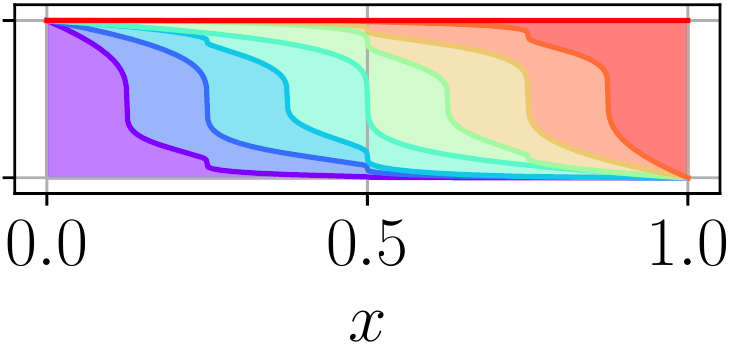}
    \begin{subfigure}{\linewidth}
      \centering
      \includegraphics[width=0.8\linewidth]{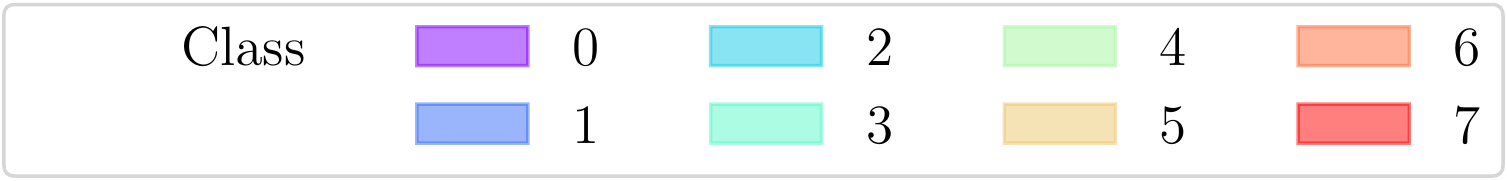}
      \label{fig:eta-legend}
    \end{subfigure}
    \medskip

    \begin{subfigure}{.53\linewidth}
      \centering
      \includegraphics[width=\linewidth]{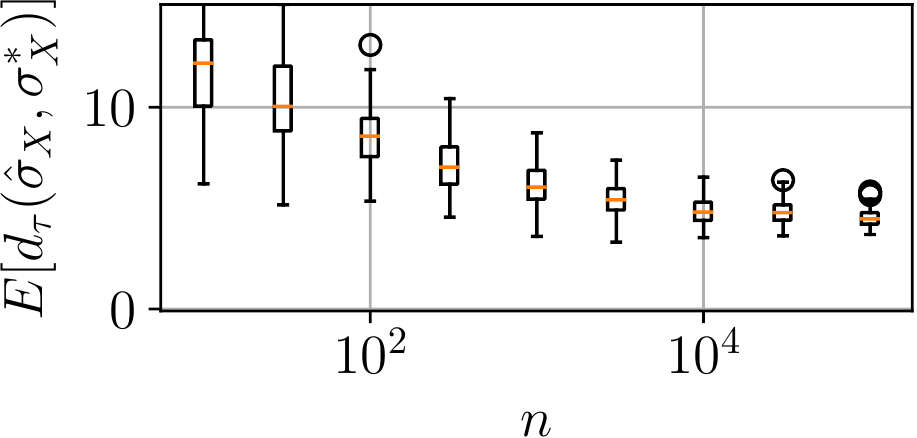}
      \caption{$\alpha=0.2$.}
      \label{fig:learning-in-n-noisy}
    \end{subfigure}\begin{subfigure}{.45\linewidth}
      \centering
      \includegraphics[width=\linewidth]{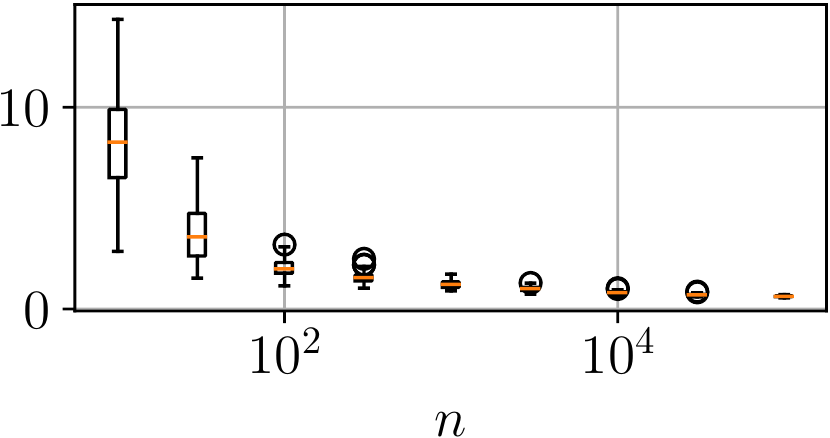}
      \caption{$\alpha=0.8$}
      \label{fig:learning-in-n-separ}
    \end{subfigure}
    \caption{Probability of each class on $[0,1]$ for $\alpha \in\{0.2, 0.8\}$ and boxplot of
100 independent estimations of $\E\left[ d_\tau(\hat{\sigma}_X, \sigma^*_X) \right]$
	as a function of $n$.} 
\label{fig:learning-in-n-dist}
\end{figure}

\noindent {\bf Real data.} Regarding top-$k$ performance, for two popular datasets,
MNIST and fashion-MNIST, the OVO label ranking approach is benchmarked against
the rankings based on the probability estimates related to a multiclass
logistic regression in Table \ref{tab:real-data-exps}.

\begin{table}[h]
      \centering
      { \footnotesize
	\caption{Top-$k$ performance. The time to fit the model is given by the last column.}
      \label{tab:real-data-exps}
	\begin{tabular}[h]{ccccc}
	    \toprule
	    Dataset & Model & Top-1  & Top-5 & Fit time\\
	\cmidrule{1-5}
	\multirow{2}{*}{MNIST} &  LogReg &  0.924 &  0.995 & 50 min \\
	&  OVO &  \textbf{0.943} &  \textbf{0.997} & 40 min\\
\cmidrule{1-5}
	Fashion &  LogReg & 0.857 & 0.997 & 35 min\\
	-MNIST &  OVO & \textbf{0.863} & \textbf{0.997} & 60 min\\
	\bottomrule
	\end{tabular}
  }
\end{table}
 
\section{CONCLUSION}\label{sec:concl}
In this paper, a statistical problem halfway between multiclass classification and posterior probability estimation, referred to as \textit{label ranking} here, is considered. The goal is to design a method to rank, for any test observation $X$, all the labels $y$ that can be possibly assigned to it by decreasing order of magnitude of the (unknown) posterior probability $\mathbb{P}\{Y=y\mid X  \}$. Formulated as a specific ranking median regression problem with incomplete observations, this problem is shown to have a solution that takes the form of a Copeland score, involving pairwise comparisons only. Based on this crucial observation, it is proved that the OVO procedure for multiclass classification permits to build, from training classification/labelled data, the optimal ranking with high probability, under appropriate hypotheses. This is also empirically supported by numerical experiments. Remarkably, the analysis carried out here incidentally provides a rate bound for the OVO classifier.

\section*{APPENDIX - TECHNICAL DETAILS}
\subsection*{Proof of Lemma \ref{lem}}
As a first go, define $\Sigma^{-1}(1)$ as $Y$. Next, given $X$ and $\Sigma^{-1}(1)=Y$, draw $\Sigma'$ as a BTLP model on the set $\mathcal{I}=\{1,\; \ldots,\; K\}\setminus\{\Sigma^{-1}(1)\}$ with preference parameters $\eta_k(X)$, $k\in \mathcal{I}$. For all $r\in\{1,\; \ldots,\; K-1\}$, set $\Sigma^{-1}(r+1)=\Sigma'^{-1}(r)$ and invert the permutation $(\Sigma^{-1}(1),\; \ldots,\; \Sigma^{-1}(K))$ to get a random permutation $\Sigma$ with the desired properties. \subsection*{Proof of Theorem \ref{thm:main}}
Fix $\delta\in{0,1}$ and let $1\leq k<l\leq K$. Assumption \ref{hyp:noise} implies that the Mammen-Tsybakov noise condition is fulfilled for the binary classification problem related to the pair $(X,Y)$ given that $Y\in\{k,\; l\}$. When Assumptions \ref{hyp:comp}-\ref{hyp:min} are also satisfied, a possibly fast rate bound for the risk excess of the empirical risk minimizer $\widehat{g}_{k,l}$ can be established, as stated in the following lemma.

\begin{lemma}\label{lem2} Suppose that Assumptions \ref{hyp:noise}-\ref{hyp:min} are fulfilled. Let $1\leq k<l\leq K$. Then, for all $\delta\in (0,1)$, we have \textit{w.p.} $\ge 1-\delta$: $\forall n\geq 1$,
\begin{equation}
L_{k,l}(\widehat{g}_{k,l})-L_{k,l}^*\leq 2\left(\inf_{g\in \mathcal{G}}L_{k,l}(g)-L_{k,l}^*  \right)+r_n(\delta),
\end{equation}
where, for all 
$n\geq n_0(\delta, \alpha,\epsilon, B, V)$ and $\delta\in (0,1)$,
\begin{multline*}
r_n(\delta)=
2  \left( 1/(nh) \right)^{\frac{1}{2-\alpha}} \times \\
	    \left[ \left(64 C^2 V\log n\right)^{\frac{1}{2-\alpha}}
	    + \left( 32 \log(2/\delta) \right)^{\frac{1}{2-\alpha}} \right].
\end{multline*}
\end{lemma}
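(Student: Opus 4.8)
The plan is to recast the $(k,l)$ duel as a standard binary classification problem with respect to the class-conditional law $\mu_{k,l}$ of $X$ given $Y\in\{k,l\}$, to verify that the Mammen--Tsybakov margin condition holds for this reduced problem, and then to invoke an off-the-shelf fast-rate oracle inequality for empirical risk minimization over a {\sc VC} class. First I would record that, conditionally on $Y\in\{k,l\}$, the label $Y_{k,l}$ has regression function $\mathbb{P}\{Y_{k,l}=+1\mid X,\,Y\in\{k,l\}\}=\eta_l(X)/(\eta_k(X)+\eta_l(X))=1-\eta_{k,l}(X)$, so that $g^*_{k,l}$ is the Bayes rule for this problem and the relevant margin is $\lvert 2\eta_{k,l}(X)-1\rvert$, precisely the quantity controlled by Assumption~\ref{hyp:noise}.

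The first genuine step is to transfer the noise condition from the marginal $\mu$ to $\mu_{k,l}$. Writing $p_{k,l}=p_k+p_l$, the density ratio is $d\mu_{k,l}/d\mu=(\eta_k+\eta_l)/p_{k,l}$, which is bounded above by $1/\varepsilon$: indeed the numerator is at most $1$ and $p_{k,l}=\mathbb{E}[\eta_k(X)+\eta_l(X)]\ge \varepsilon$ by Assumption~\ref{hyp:min}. Hence
\begin{equation*}
\mu_{k,l}\{\lvert 2\eta_{k,l}(X)-1\rvert<t\}\le \frac{1}{\varepsilon}\,\mu\{\lvert 2\eta_{k,l}(X)-1\rvert<t\}\le \frac{B}{\varepsilon}\,t^{\frac{\alpha}{1-\alpha}},
\end{equation*}
so the margin condition holds for the reduced problem with exponent $\gamma=\alpha/(1-\alpha)$ and constant $B/\varepsilon$; this is where $\varepsilon$ enters the quantity $h=h(B,\alpha,\varepsilon)$, and the resulting fast rate carries the exponent $1/(2-\alpha)$ since $(\gamma+1)/(\gamma+2)=1/(2-\alpha)$.

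Next I would handle the random sample size $m:=n_k+n_l$. Conditionally on the event $\{Y_i\in\{k,l\}\}$, the points of $\mathcal{D}_{k,l}$ are i.i.d. from $\mu_{k,l}$, and $m\sim\mathrm{Bin}(n,p_{k,l})$ with $p_{k,l}\ge\varepsilon$. A Chernoff bound gives $m\ge n\,p_{k,l}/2\ge n\varepsilon/2$ with probability at least $1-\delta/2$, provided $n\ge n_0(\delta,\alpha,\varepsilon,B,V)$; this concentration is exactly what dictates the threshold $n_0$. On this event I would apply the classical fast-rate oracle inequality for ERM over a {\sc VC} class of dimension $V$ under a margin condition of exponent $\gamma$ (see \cite{MasNed06}, and \cite{Mammen}): with conditional probability at least $1-\delta/2$,
\begin{equation*}
L_{k,l}(\widehat g_{k,l})-L_{k,l}^*\le 2\Big(\inf_{g\in\mathcal{G}}L_{k,l}(g)-L_{k,l}^*\Big)+c\Big(\tfrac{V\log m}{m}\Big)^{\frac{1}{2-\alpha}}+c'\Big(\tfrac{\log(2/\delta)}{m}\Big)^{\frac{1}{2-\alpha}}.
\end{equation*}
Substituting $m\ge n\varepsilon/2$, absorbing the universal constants and the noise constant $B/\varepsilon$ into $h$, and combining the two events by a union bound yields precisely the claimed bound with the stated $r_n(\delta)$.

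The main obstacle is not the final oracle inequality, which is off-the-shelf, but the coupling of two delicate reductions: transferring the Massart/Mammen--Tsybakov condition from $\mu$ to the class-conditional $\mu_{k,l}$ (which forces the $1/\varepsilon$ factor, hence the $\varepsilon$-dependence of $h$) while simultaneously controlling the random effective sample size $n_k+n_l$ so that the fast rate expressed in $m$ can be rewritten in $n$ uniformly with high probability. It is this second point that is responsible for the sample-size threshold $n_0(\delta,\alpha,\varepsilon,B,V)$, and one must be careful that the conditioning on $\{Y_i\in\{k,l\}\}$ used to invoke the i.i.d. oracle bound is compatible with the Chernoff control of $m$.
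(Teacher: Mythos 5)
Your proof is correct, and it reaches the lemma by a genuinely different route than the paper's. You reduce to the conditional problem: given the subsample size $m=n_k+n_l$, the points of $\mathcal{D}_{k,l}$ are i.i.d.\ from $\mu_{k,l}$; you transfer the margin condition to $\mu_{k,l}$ through $d\mu_{k,l}/d\mu=(\eta_k+\eta_l)/(p_k+p_l)\le 1/\varepsilon$ (valid, using Assumption \ref{hyp:min} to get $p_k+p_l\ge\varepsilon$); you then apply a black-box fast-rate oracle inequality at sample size $m$ and eliminate the randomness of $m$ by Chernoff ($m\ge n\varepsilon/2$ with probability $1-\delta/2$) plus a union bound. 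The paper never conditions: it observes that minimizing $\widehat{L}_{k,l}(g)=P_nc_{k,l}/P_nh_{k,l}$ over $\mathcal{G}$ is the same as minimizing $P_nc_{k,l}$, since the denominator $P_nh_{k,l}$ is independent of $g$, so that $\widehat{g}_{k,l}$ is an ordinary ERM over the \emph{full} sample of $n$ i.i.d.\ pairs for the excess-loss class $\mathcal{F}_{k,l}$ (the indicators $\mathbb{I}\{y\in\{k,l\}\}$ absorbing the subsampling); the noise assumption is converted into the variance control $\Var(f)\le \beta_0 (Pf)^{\alpha}$ and the localized Rademacher/Talagrand fixed-point machinery (Theorem 5.8 of \cite{Boucheron2005}, with the variational inequality solved via \cite{Zou2009}) is run directly at sample size $n$, followed by one rescaling by $(p_k+p_l)^{-1}\le 1/\varepsilon$. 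The difference shows in the provenance of $n_0(\delta,\alpha,\varepsilon,B,V)$: in the paper it arises from requiring the $(\log n /n)^{1/(2-\alpha)}$ term to dominate the $\log(2/\delta)/n$ term in the maximum solving the fixed-point equation, whereas in your argument it comes from the binomial concentration, so it would depend essentially only on $(\delta,\varepsilon)$, the remaining parameters entering through analogous low-order terms hidden in your black-box inequality (which you should remember to track). Your route buys modularity and brevity, and in fact a slightly milder $\varepsilon$-dependence (roughly $\varepsilon^{-(2-\alpha)}$ in place of the paper's $\varepsilon^{-(3-2\alpha)}$ inside $h$, up to the unspecified constants of the oracle inequality), at the price of an extra confidence split and the two verifications you rightly flag: that the subsample is conditionally i.i.d.\ so the oracle inequality holds for each fixed $m$ and hence unconditionally, and that the rate bound is decreasing in $m$ so that $m\ge n\varepsilon/2$ may be substituted. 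The paper's route buys fully explicit constants ($\beta$ and $h$) and dispenses with all conditioning bookkeeping.
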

\begin{proof}
The result is a slight variant of that proved in \cite{BBM05} (see therein), the sole difference lying in the fact that the empirical risk (and, consequently, its minimizer as well) is built from a random number of training observations (\textit{i.e.} those with labels in $\{k,\; l\}$).
Note that $h =\epsilon^{3-2\alpha} (1-\alpha)^{1-\alpha} \alpha^\alpha/ B^{1-\alpha}$.
Details are given in the Supplementary Material. \end{proof}

Observe that the probabilities appearing in this proof are conditional probabilities given the training sample $\mathcal{D}$ and, as a consequence, must be considered as random variables. However, to simplify notations, we omit to write the conditioning w.r.t. $\mathcal{D}$ explicitly.
Notice first that Assumption \ref{hyp:noise} implies that, 
with $\beta = B^{1-\alpha}/(\left( 1 - \alpha \right)^{1-\alpha} \alpha^\alpha)$,
\begin{multline}\label{eq:control1}
\mathbb{P}_X\left\{ \widehat{g}_{k,l}(X) \neq g^*_{k,l}(X) \mid Y\in\{k,\; l \}  \right\}\leq \\ \beta \left( L_{k,l}(\widehat{g}_{k,l})-L_{k,l}^* \right)^{\alpha},
\end{multline}
with probability one. Observe in addition that
\begin{multline}
\mathbb{P}_X\left\{ \widehat{g}_{k,l}(X) \neq g^*_{k,l}(X) \mid Y\in\{k,\; l \}  \right\}=\\
\mathbb{E}_X\left[\frac{d\mu_{k,l}}{d\mu}(X)\times \mathbb{I}\left\{ \widehat{g}_{k,l}(X) \neq g^*_{k,l}(X) \right\} \right],
\end{multline}
denoting by $\mu_{k,l}=(p_k\mu_k+p_l\mu_l)/(p_k+p_l)$ the conditional distribution of $X$ given that $Y\in\{k,\; l\}$. Under Assumption \ref{hyp:min}, we almost-surely have:
$$
\frac{d\mu_{k,l}}{d\mu}(X)\geq \frac{\varepsilon}{p_k + p_l} \geq \varepsilon.
$$
Hence, from \eqref{eq:control1} and Lemma \ref{lem2}, we get that
\begin{multline}\label{bound_bin}
\frac{\varepsilon}{\beta}\mathbb{P}_X\left\{ \widehat{g}_{k,l}(X) \neq g^*_{k,l}(X)  \right\}\leq \\
\left( L_{k,l}(\widehat{g}_{k,l})-L_{k,l}^* \right)^{\alpha}\leq  2 \left(\inf_{g\in \mathcal{G}}L_{k,l}(g)-L_{k,l}^*  \right)^{\alpha}+r_n^{\alpha}(\delta),
\end{multline}
using Minkowski's inequality.
Since
$$
\bigcap_{k<l}\left\{ \widehat{g}_{k,l}(X)= g^*_{k,l}(X)   \right\}\subset \left\{  \sigma^*_{X}=\widehat{\sigma}_X   \right\},
$$ with probability one, combining the bound above with the union bound, for all $\delta\in (0,1)$, \textit{w.p.} $\ge 1-\delta$:
\begin{multline*}
\mathbb{P}_X\left\{\sigma^*_{X}\ne\widehat{\sigma}_X\right\}
\leq \sum_{k<l}\mathbb{P}_X\left\{ \widehat{g}_{k,l}(X) \neq g^*_{k,l}(X)  \right\}\leq \\
\frac{\beta}{\varepsilon} \left\{\binom{K}{2} r_n^{\alpha}
    \left( \frac{\delta}{\binom{K}{2}} \right)
+ \sum_{k<l}2 \left( \inf_{g\in \mathcal{G}}L_{k,l}(g)-L_{k,l}^*  \right)^{\alpha}      \right\}.
\end{multline*}

 \subsection*{Proof of Proposition \ref{prop}} 
Let us first show that $ W_k^*=W_k(\sigma^*_.).$ 

For any ranking rule $s$ and all $x\in\mathbb{R}^q$, we define
$$
{\rm Top}_k(s(x))=\{s(X)^{-1}(1),\; \ldots,\;  s(X)^{-1}(k)\},
$$
and also set ${\rm Top}_k^*(x)={\rm Top}_k(\sigma^*_x)$.
Indeed, for any ranking rule $s$, we can write 
$$
W_k(s)=\mathbb{E}\left[ \mathbb{E}\left[ \ell_k(Y,\; s(X))  \mid X\right] \right],
$$ and we almost-surely have
\begin{multline}\label{eq:cond_exp}
\mathbb{E}\left[ \ell_k(Y,\; s(X))  \mid X\right]=\\
\sum_{l=1}^K \eta_l(X)\mathbb{I}\{ l\notin {\rm Top}_k(s(X)) \}.
\end{multline}
As $\sigma^*_x$ is defined through \eqref{eq:opt}, one easily sees that the quantity \eqref{eq:cond_exp} is minimum for any ranking rule $s(x)$ s.t.
\begin{equation}\label{eq:opt2}
{\rm Top}_k(s(X))={\rm Top}_k^*(X).
\end{equation}
Hence, the collection of optimal ranking rules regarding the risk \eqref{eq:topk_risk} coincides with the set of ranking rules such that \eqref{eq:opt2} holds true with probability one.
Observe that, with probability one,
\begin{multline*}
\mathbb{I}\left\{Y\notin {\rm Top}_k(s(X))  \right\}- \mathbb{I}\left\{Y\notin {\rm Top}_k^*(X)  \right\}\leq \\
\mathbb{I}\left\{ {\rm Top}_k^*(X)\neq {\rm Top}_k(s(X))  \right\},
\end{multline*}
for any ranking rule $s(x)$, so that
$$
W_k(s)-W^*_k\leq \mathbb{P}_{X}\left\{ {\rm Top}_k(s(X)) \neq {\rm Top}_k^*(X) \right\}.
$$
In addition, notice that
\begin{multline*}
W_k(\widehat{\sigma}_X)-W_k^*\leq \mathbb{P}_{X}\left\{ {\rm Top}_k^*(X)\neq {\rm Top}_k(\widehat{\sigma}_X)  \right\}=\\
\sum_{\mathcal{L}\subset \mathcal{Y}:\; \# \mathcal{L}=k} \mathbb{P}_{X} \left\{{\rm Top}_k^*(X)=\mathcal{L},\;  {\rm Top}_k^*(X)\neq {\rm Top}_k(\widehat{\sigma}_X)   \right\},\\
\leq \sum_{\mathcal{L}\subset \mathcal{Y}:\; \# \mathcal{L}=k}\sum_{l\in\mathcal{L},\; m\notin \mathcal{L}}\mathbb{P}_{X}\left\{  \widehat{g}_{l,m}(X)\neq g^*_{l,m}(X)  \right\}, \qquad \\
\leq \frac{\beta}{\varepsilon}
\binom{K}{k}k(K-k)\times \qquad \qquad \qquad \qquad \qquad \qquad \\ 
 \left( r_n^{\alpha}\left( \frac{\delta}{\binom{K}{2}} \right) + 2 \cdot \max_{m\neq l}\left(   \inf_{g\in \mathcal{G}}L_{l,m}(g)-L_{l,m}^* \right)^{\alpha} \right),
\end{multline*}
using \eqref{bound_bin}.

\clearpage

\bibliographystyle{abbrvnat}
\setcitestyle{authoryear,open={((},close={))}}

\onecolumn
\section*{APPENDIX - SUPPLEMENTARY MATERIAL}
\subsection{Detailed proof of Lemma \ref{lem2}}

As pointed out, the result is a slight variant of that proved in 
\cite[pages 342-346]{Boucheron2005}. The sole difference lies in the fact that
fact that the empirical risk (and, consequently, its minimizer as well) is
built from a random number of training
observations (\textit{i.e.} those with labels in $\{k,\; l\}$).
Here, we detail the proof for completion.

The derivation of fast learning speeds for general classes of functions relies
on a sensible use of Talagrand's inequality that exploits the upper bound
on the variance of the loss provided by the noise condition, combined with
convergence bounds on Rademacher averages, see \cite{BBM05}.

To begin with, we define classes of functions that mirror the ones used by \cite{Boucheron2005}.
Those are specifically introduced for the problem of associating elements of
the sample to the label $k$ or $l$, with $k<l, (k,l) \in \{ 1, \dots, K \}^2$
any pair of labels.
Given a label $y\in\mathcal{Y}$, it corresponds to solving binary classification for
$y_{k,l} = \I\{ y = k\} - \I\{ y= l \}$ for all of the concerned instances, \textit{i.e.}
those with labels $k$ or $l$.
For each binary classifier $g$ in $\mathcal{G}$, we introduce the cost function $c_{k,l}$ and 
the proportion of concerned instances $h_{k,l}$, such that,
for all $x,y \in \X \times \{ 1, \dots, K\}$,
\begin{align*}
    c_{k,l}(x, y) = \I\{ g(x) \ne y_{k,l}, y \in \{ k,l \} \} \quad 
    \text{and} \quad h_{k,l}(y) = \I\{y \in \{k,l\}\}.
\end{align*}
Denote by $\mathcal{F}_{k,l}$ the regret of each function $c_{k,l}$, formally:
\begin{align*}
    \mathcal{F}_{k,l} := \left\{ f_{k,l}: x,y \mapsto \I\{y \in \{k,l\}\} \cdot
    \left( c_{k,l}(x, y) - \I\{ g_{k,l}^*(x) \ne y_{k,l} \} \right) \mid g \in
    \mathcal{G} \right\}.
\end{align*}
With $P$ as the expectation over $X,Y$ and $P_n$ as the empirical measure, 
one can rewrite the risk $L_{k,l}$ and empirical risk $\widehat{L}_{k,l}$ as:
\begin{align*}
    L_{k,l}(g) = \frac{P c_{k,l}}{P h_{k,l}} \quad \text{and} \quad 
    \widehat{L}_{k,l}(g) = \frac{P_n c_{k,l}}{P_n h_{k,l}}.
\end{align*}
Unlike $c_{k,l}$ the empirical mean $P_nh_{k,l}$ does not depend on an element
of $g\in\mathcal{G}$, thus minimizing $\widehat{L}_{k,l}$ is the same problem
as minimizing $P_n c_{k,l}$.
The rest of the proof consists in using \cite[section 5.3.5 therein]{Boucheron2005}
to derive an upper bound of $Pf$, with $f \in \mathcal{F}_{k,l}$. Talagrand's inequality
is useful because of an upper-bound on the variance of the elements in $\mathcal{F}_{k,l}$.

Assumption \ref{hyp:noise} induces a control on the variance of the elements of
$\mathcal{F}_{k,l}$. \cite[page 202 therein]{Bousquet2004} reviewed equivalent formulations
of the noise assumption, in the case of binary classification. One of those
formulations is similar to the following equation:
\begin{align}\label{eq:noise-implied}
    \p \left\{ g(X) \ne g^*_{k,l}(X) \right\} \le 
    \beta ( L_{k,l}(g) - L_{k,l}^* )^\alpha,
\end{align}
where $\beta = \frac{ B^{1-\alpha} }{\epsilon (1-\alpha) ^{1-\alpha} \alpha^\alpha}$.
The proof is the same as that of \cite[page 202 therein]{Bousquet2004}, 
but is followed by Assumption \ref{hyp:min}, see \cref{eq_noise_cond} for more details.

Set $\beta_0 = \beta (p_k + p_l)^\alpha$, Equation \eqref{eq:noise-implied} implies, 
for any $f \in \mathcal{F}_{k,l}$, that,
with $T(f) = \sqrt{\beta_0} \cdot (Pf)^{\alpha/2}$:
\begin{align}\label{var-control}
    \text{Var}(f) \le  \p \left \{ g(X) \ne g^*_{k,l}(X) \right\}
    \le  \beta (L_{k,l}(g) - L_{k,l}^* )^\alpha = 
    \beta (p_k + p_l)^\alpha \cdot (Pf)^\alpha 
    = T^2(f).
\end{align}
The function $T(f)$ controls the variance of the elements in $\mathcal{F}_{k,l}$, and
is used to reweights its instances before applying Talagrand's inequality.

The complexity of the proposal family of functions is controlled using the notion of Rademacher average, as in \cite{Boucheron2005}.
Let $\mathcal{F}$ be a class of functions, its Rademacher average $R_n(\mathcal{F})$ is defined as:
\begin{align*}
  R_n(\mathcal{F}) := 
  \E_{\sigma} \sup_{f\in\mathcal{F}} \frac{1}{n}\abs{\sum_{i=1}^n \sigma_i f(X_i, Y_i)}.
\end{align*}
Introduce $\mathcal{F}_{k,l}^*$ as the star-hull of $\mathcal{F}_{k,l}$,
\textit{i.e.} $\mathcal{F}_{k,l}^* = \{ \alpha f : \alpha \in [0,1], f \in
\mathcal{F}_{k,l} \}$,
we define two functions that characterize the properties of the problem of interest, and are required to apply \cite[Theorem 5.8 therein]{Boucheron2005}:
\begin{align}\label{w-def}
    w(r) = \sup_{f \in \mathcal{F}_{k,l}^* : Pf \le r } T(f) \qquad \text{ and } \qquad
    \psi(r) = \E R_n\{ f\in \mathcal{F}_{k,l}^* : T(f) \le r \}.
\end{align}
Finally \cite[Theorem 5.8 therein]{Boucheron2005} implies that, for all $\delta >0$, with $r^*_0(\delta)$ the solution of:
\begin{align}\label{r-star-def}
    r = 4 \psi (w(r)) + 2 w(r) \sqrt{ \frac{2 \log \frac{2}{\delta} }{n}} + \frac{16 \log \frac{2}{\delta} }{3n},
\end{align}
we have that, with probability at least $1-\delta$,
\begin{align*}
    L_{k,l}( \hat{g}_{k,l}) - L_{k,l}^* \le  2 \left( \inf_{g\in \mathcal{G}} L_{k,l}(g) - L_{k,l}^* \right) + \frac{r_0^*(\delta)}{p_k + p_l}.
\end{align*}

Now, we can conclude by combining this result with properties of $w$ and $\psi$,
that originate from the noise assumption and the control on the complexity of $\mathcal{G}$,
respectively.
Assumption \ref{hyp:comp} states that the proposal class $\mathcal{G}$ is of VC-dimension $V$.
Permanence properties of VC-classes of functions, 
see \cite[section 2.6.5 therein]{VanderVaart1996}, 
imply that $\mathcal{F}_{k,l}$ is also VC.
It follows from \cite{BBM05} that:
\begin{align*}
    \psi(r) \le Cr\sqrt{\frac{V}{n} \log n}.
\end{align*}
Plugging this result into \Cref{r-star-def} gives:
\begin{align*}
    r^*_0(\delta) \le \frac{2w(r^*_0(\delta))}{\sqrt{n}} \left[ 2 C\sqrt{V \log n},
    +  \sqrt{ 2 \log \frac{2}{\delta}} \right] + \frac{16 \log \frac{2}{\delta} }{3n}.
\end{align*}
Combining it with the definition of $w$ in \eqref{w-def} and the control on the
variance laid forth in \eqref{var-control} yields:
\begin{align}\label{eq:inter-00}
    r^*_0(\delta) \le \left[r^*_0(\delta)\right] ^{\alpha/2} \frac{2 \sqrt{\beta_0}}{\sqrt{n}} 
    \left[ 2 C\sqrt{V \log n}
    +  \sqrt{ 2 \log \frac{2}{\delta}} \right] + \frac{16 \log \frac{2}{\delta} }{3n}.
\end{align}

Equation \eqref{eq:inter-00} is a variational inequality, and an upper bound on the
solution can be derived directly from \cite[Lemma 2 therein]{Zou2009}. It writes:
\begin{align*}
    r^*_0(\delta) \le \max \left\{ \left( \frac{16 \beta_0}{n} \right)^{\frac{1}{2-\alpha}}
    \left[ 2 C \sqrt{V\log n} + \sqrt{2 \log(2/\delta)} \right]^{\frac{2}{2-\alpha}},
    \frac{32 \log (2/\delta)}{3n}
\right\}.
\end{align*}
Using the convexity of $x\mapsto x^{\frac{2}{2-\alpha}}$, the right-hand side 
of the above inequality can be upper-bounded, which leads to:
\begin{align}\label{eq:pre-conclu}
    r^*_0(\delta) \le 2 \cdot \max \left\{ \left( \frac{16 \beta_0}{n} \right)^{\frac{1}{2-\alpha}}
	    \left[ \left(4 C^2 V\log n\right)^{\frac{1}{2-\alpha}}
	    + \left( 2 \log(2/\delta) \right)^{\frac{1}{2-\alpha}} \right],
    \frac{32 \log (2/\delta)}{3n}
    \right\}.
\end{align}
Assumption \ref{hyp:min} implies that $(p_k+p_l)^{-1} \le 1/\epsilon$. 
Introducing $r^*(\delta) = (p_k+p_l)^{-1} r_0^*(\delta)$, Equation \eqref{eq:pre-conclu} combined with the definition of $\beta_0$ implies:
\begin{align}\label{eq:pre-conclu-2}
    r^*(\delta) \le 2 \cdot \max \left\{ \left( \frac{16 \beta}{\epsilon^{2-2\alpha} n} \right)^{\frac{1}{2-\alpha}}
	    \left[ \left(4 C^2 V\log n\right)^{\frac{1}{2-\alpha}}
	    + \left( 2 \log(2/\delta) \right)^{\frac{1}{2-\alpha}} \right],
    \frac{32 \log (2/\delta)}{3 \epsilon n}
    \right\}.
\end{align}
Introduce $n_0(\delta, \alpha, \epsilon, B, V)$ as the lowest $n$ such that the first term in the maximum in Equation \eqref{eq:pre-conclu-2} dominates the second term,
it satisfies:
\begin{align}\label{eq:ineq-n1}
    n_0^{\frac{1-\alpha}{2-\alpha}} \left[ \left( 4 C^2 V \log(n_0) \right)^{\frac{1}{2-\alpha}}
    + \left( 2 \log (2/\delta) \right)^{\frac{1}{2-\alpha}} \right] \ge \frac{32 \log(2/\delta)}{3 \left[ 16 \beta \epsilon^\alpha \right]^{\frac{1}{2 - \alpha}}}.
\end{align}
and so does any $n \ge n_0$.

To conclude, we have proven that for any $\delta \in (0,1)$, for any $n \ge n_0(\delta, \alpha, \epsilon, B, V)$, we have that, with probability greater than $1-\delta$,
\begin{align*}
    L_{k,l}( \hat{g}_{k,l}) - L_{k,l}^* \le  2 \left( \inf_{g\in \mathcal{G}} L_{k,l}(g) - L_{k,l}^* \right) + r^*(\delta),
\end{align*}
with
\begin{align}\label{eq:final-r}
    r^*(\delta) = 2  \left( \frac{16 \beta}{n\epsilon^{2-2\alpha}} \right)^{\frac{1}{2-\alpha}}
	    \left[ \left(4 C^2 V\log n\right)^{\frac{1}{2-\alpha}}
	    + \left( 2 \log(2/\delta) \right)^{\frac{1}{2-\alpha}} \right].
\end{align}

\textbf{A simple upper bound on $n_0(\delta, \alpha, \epsilon, B, V)$:}\\

The right-side terms of Equation \eqref{eq:final-r} are not balanced. Indeed, as soon as $n$ is high, the term
in $\log(n)$ dominates the other. That fact can be exploited to derive a convenient upper bound on 
$n_0(\delta, \alpha, \epsilon, B, V)$, since one cannot directly solve \cref{eq:ineq-n1}.

Assume that $n \ge (2/\delta)^{\frac{1}{2C^2V}}$, then
\begin{align*}
    n^{\frac{1-\alpha}{2-\alpha}} \left[ \left( 4 C^2 V \log(n) \right)^{\frac{1}{2-\alpha}}
    + \left( 2 \log (2/\delta) \right)^{\frac{1}{2-\alpha}} \right] \ge 
    2  \left( 2 n^{1-\alpha} \log (2/\delta) \right)^{\frac{1}{2-\alpha}}.
\end{align*}
However, we have that:
\begin{align*}
    2  \left( 2 n^{1-\alpha} \log (2/\delta) \right)^{\frac{1}{2-\alpha}} \ge 
    \frac{32 \log(2/\delta)}{3 \left[ 16 \beta \epsilon^\alpha \right]^{\frac{1}{2 - \alpha}}},
\end{align*}
if and only if:
\begin{align*}
n \ge  \log(2/\delta) \left( \frac{(16/3)^{2-\alpha}}{32 \beta \epsilon^\alpha} \right)^{\frac{1}{1-\alpha}}.
\end{align*}
Hence, we have proven that:
\begin{align*}
    n_0(\delta, \alpha, \epsilon, B, V) \le \max \left\{  
	(2/\delta)^{\frac{1}{2C^2V}},
	\log(2/\delta) \left( \frac{(16/3)^{2-\alpha}}{32 \beta \epsilon^\alpha} \right)^{\frac{1}{1-\alpha}} 
    \right\}.
\end{align*}

\subsubsection{On the equivalent noise condition}\label{eq_noise_cond}

The proof is almost similar to that of \cite[page 202 therein]{Bousquet2004},
and is recalled here.
The excess loss of a classifier can be written as follows:
\begin{align*}
    L_{k,l}(g) - L^*_{k,l} &= 
\E \left[ \abs{2\eta_{k,l}(X) - 1} 
    \cdot \I\left\{ g(X) \ne g_{k,l}^*(X) \right\} \mid Y \in \left\{ k,l \right\}\right].
\end{align*}
Using Markov's inequality, for all $t \ge 0$:
\begin{align*}
     L_{k,l}(g) - L_{k,l}^*  & \ge 
    t \cdot \E \left[ \I\left\{ g(X) \ne g_{k,l}^*(X) \right\} \cdot
    \I\left\{ \abs{2\eta_{k,l}(X) - 1} \ge t \right\} 
    \mid Y \in \left\{ k,l \right\} \right], \\
  &\ge t \cdot  \pr \left \{\abs{2\eta_{k,l}(X) - 1} \ge t \mid Y \in \left\{ k,l \right\} \right\} \\
  & \qquad \qquad - t \cdot \E \left[ \I\left\{ g(X) = g^*_{k,l}(X) \right\}
  \I\left\{  \abs{2\eta_{k,l}(X) - 1} \ge t \right\} \mid Y \in \left\{ k,l \right\} \right].
\end{align*}
Assumption \ref{hyp:noise} and $\I\left\{  \abs{2\eta_{k,l}(X) - 1} \ge t \right\} \le 1$
imply:
\begin{align*}
     L_{k,l}(g) - L_{k,l}^* &\ge t \cdot (1- Bt^{ \frac{\alpha}{1-\alpha}} ) 
    - t \cdot \pr \left\{ g(X) = g_{k,l}^*(X) \mid Y \in \left\{ k,l \right\} \right\}\\
    &\ge t \cdot \left( \pr \left\{ g(X) \ne g_{k,l}^*(X) \mid Y \in \left\{ k,l \right\} \right\} 
    - Bt^{ \frac{\alpha}{1-\alpha}} \right),
\end{align*}
Choosing
\begin{align*}
    t = \left[ \frac{(1-\alpha) \pr \left\{ g(X) \ne g_{k,l}^*(X) 
    \mid Y \in \left\{ k,l \right\} \right\}}{ B} \right]^{(1-\alpha)/\alpha},
\end{align*}
finally gives:
\begin{align*}
    \pr\left\{ g(X) \ne g_{k,l}^*(X) \mid Y \in \left\{ k,l \right\} \right\}
    \le \frac{B^{1-\alpha} }{(1-\alpha)^{1-\alpha} \alpha^{\alpha} }
  (L_{k,l}(g) - L_{k,l}^* ) ^\alpha.
\end{align*}
Which implies, using Assumption \ref{hyp:min}:
\begin{align*}
    \pr\left\{ g(X) \ne g_{k,l}^*(X) \right\}
    \le \frac{B^{1-\alpha} }{\epsilon (1-\alpha)^{1-\alpha} \alpha^{\alpha} }
  (L_{k,l}(g) - L_{k,l}^* ) ^\alpha.
\end{align*}

\subsection{Experiments on simulated data}
Introduce the function $h_\alpha$, which for any $\alpha\in[0,1]$:
\begin{align*}
    h_\alpha(x) = \frac{1}{2} + \frac{1}{2} \epsilon(x) \abs{2x-1}^{\frac{1-\alpha}{\alpha}},
\end{align*}
where $\epsilon(x) = 2\I\left\{ 2x > 1 \right\} - 1 $. It has good properties
with regard to the Mammen-Tsybakov noise condition introduced in
\cite{Boucheron2005}.
We define a warped version $h_{\alpha,x_0}$ of this function $h_\alpha$ such that:
\begin{align*}
    h_{\alpha,x_0}(x) = 
    \begin{cases}
	h_{\alpha,x_0} = h_\alpha \left( \frac{x}{2x_0} \right)   & \text{if } x < x_0,\\
	h_{\alpha,x_0} = h_\alpha \left( \frac{1}{2} + \frac{x-x_0}{2(1-x_0)} \right)   & \text{if } x \ge x_0.\\ \end{cases}
\end{align*}
We use this function to define the $\eta_k$'s by recursion, and assume that $X \sim \mathcal{U}([0,1])$.
Formally, define a depth parameter $D$, the variable $Y$ belong to $K = 2^{D+1}$ classes.
Let $b_2^{(d)}(k)$ describe the decomposition in base 2 of the value $k$, \textit{i.e.} $k = \sum_{d=0}^D 2^{b_2^{(d)}(k)}$,
we set, with $x_{(d,k)} = \sum_{d=1}^D 2^{-b_2^{(d)}(k)}$: \begin{align*}
\eta_k(x) = \prod_{d=0}^D h_{\alpha, x_{(d,k)}}(x).
\end{align*}
By varying the parameter $\alpha$, one can set the classification problems to be more complicated or more easy.
If $\alpha$ is close to $1$, the problems are very simple. If $\alpha$ is close to $0$, the problems are more arduous.
\begin{figure}[h]
    \centering
    \begin{subfigure}{.39\linewidth}
      \centering
      \includegraphics[width=\linewidth]{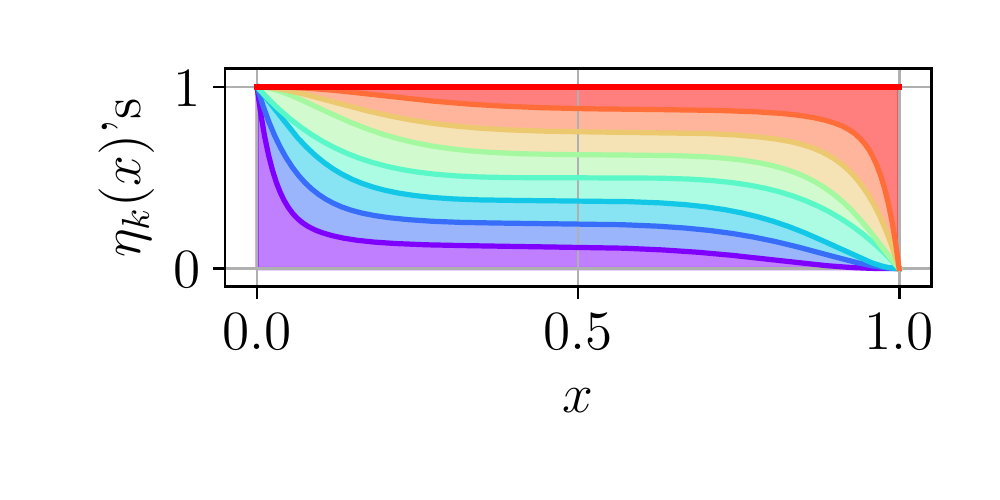}
      \caption{Distribution for $\alpha=0.2$.}
      \label{fig:eta-noisy}
    \end{subfigure}\begin{subfigure}{.39\linewidth}
      \centering
    \includegraphics[width=\linewidth]{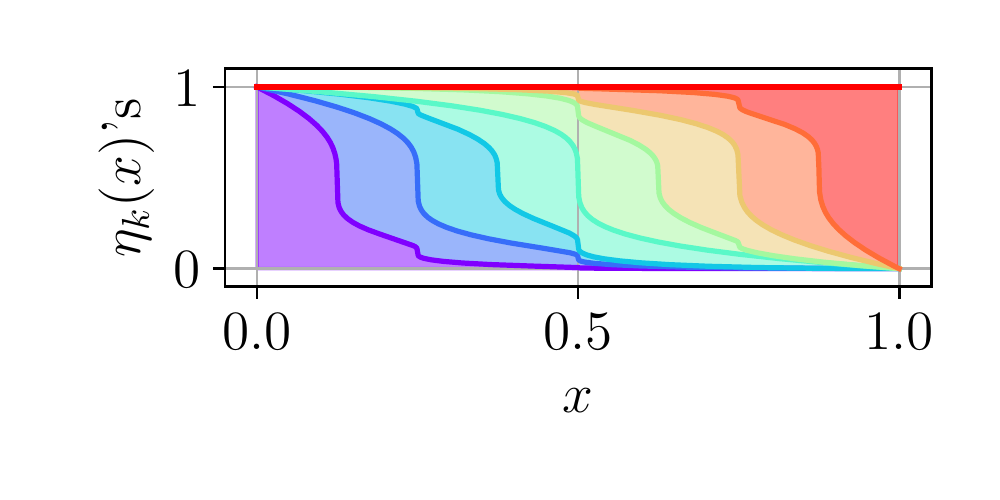}
      \caption{Distribution for $\alpha=0.8$.}
      \label{fig:eta-separ}
    \end{subfigure}

	\bigskip
    \begin{subfigure}{\linewidth}
      \centering
      \includegraphics[width=0.4\linewidth]{figures/legend.pdf}
      \label{fig:eta-legend}
    \end{subfigure}
    \caption{Cumulated histogram of the $\eta_k$'s over $[0,1]$.}
\label{fig:eta-dist}
\end{figure}

To implement the procedure described in \Cref{fig:trpseudoknn}, we learn decision
stumps in $[0,1]$, \textit{i.e.} we optimize over the family of functions
$\mathcal{G} = \left\{ g_{s, \epsilon} \mid s \in [0,1], \epsilon \in \{ -1, +1 \} \right\}$,
where, for any $x\in[0,1]$:
\begin{align*}
    g_{s, \epsilon}(x) = 2\I\left \{ (x-s)\epsilon \ge 0 \right \} - 1.
\end{align*}

\Cref{fig:cycle-simu-supp}, \Cref{fig:loss-simu-supp} 
and \Cref{fig:kendall-simu-supp} 
represent boxplots obtained with 100 independent estimations on 
1000 test points of, respectively,
     the number of cycles in predicted permutations,
     the average miss probability for the problem of predicting
     permutations $\p\{ \hat{\sigma}_X \ne \sigma_X^* \}$,
     and the average Kendall distances between predictions and ground
	    truths $\E\left[ d_\tau(\hat{\sigma}_X, \sigma_X^*) \right]$,
all as a function of the number of learning points $n$ with
\begin{align*}
    n \in \bigcup_{i\in [\![1, 4 ]\!]} \{ 10^i , 3\times 10^i\} \cup \{ 10^5 \} .
\end{align*}

\begin{figure}
    \centering
    \begin{subfigure}{.4\linewidth}
      \centering
      \includegraphics[width=\linewidth]{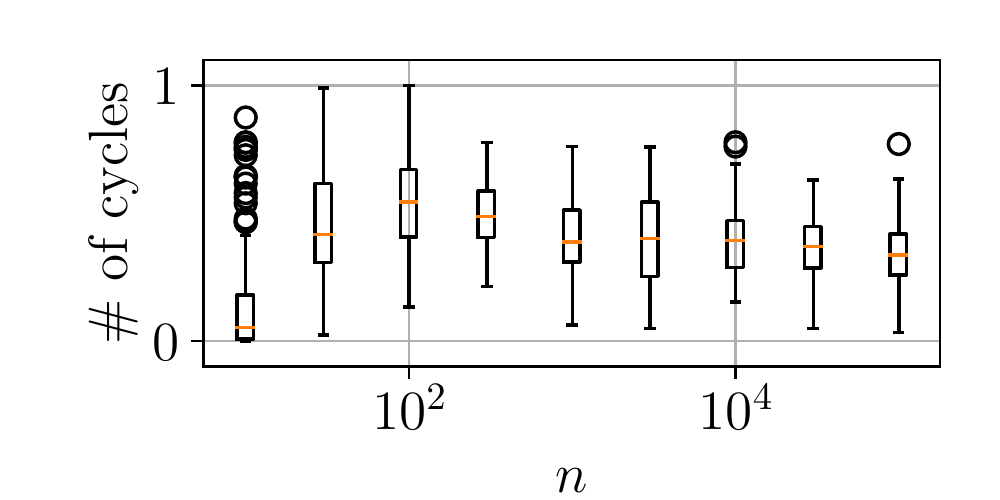}
      \caption{Dynamics for $\alpha=0.2$.}
      \label{fig:learning-in-n-noisy}
    \end{subfigure}\begin{subfigure}{.4\linewidth}
      \centering
      \includegraphics[width=\linewidth]{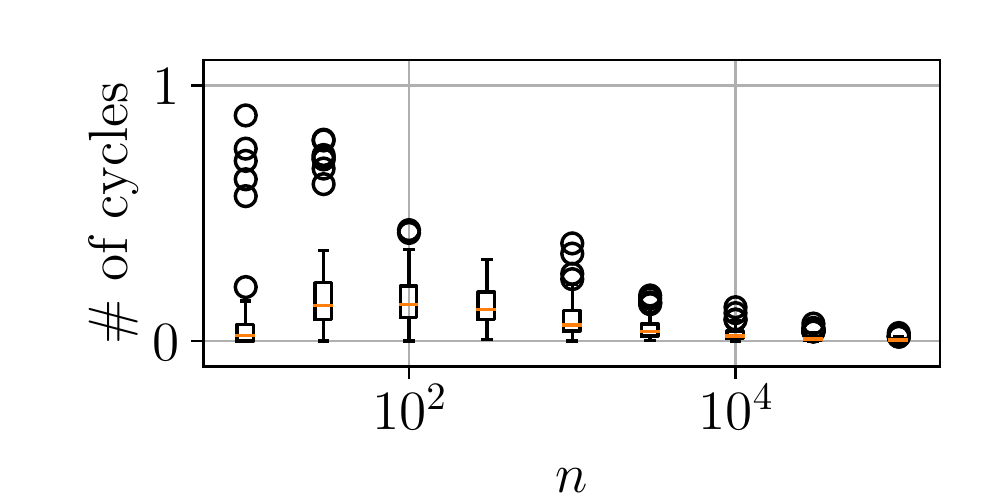}
      \caption{Dynamics for $\alpha=0.8$}
      \label{fig:learning-in-n-separ}
    \end{subfigure}
    \caption{Boxplot of 100 independent estimations of the proportion of
    predictions with cycles as a function of $n$.}
    \label{fig:cycle-simu-supp}
    \bigskip
    \bigskip
    \centering
    \begin{subfigure}{.4\linewidth}
      \centering
      \includegraphics[width=\linewidth]{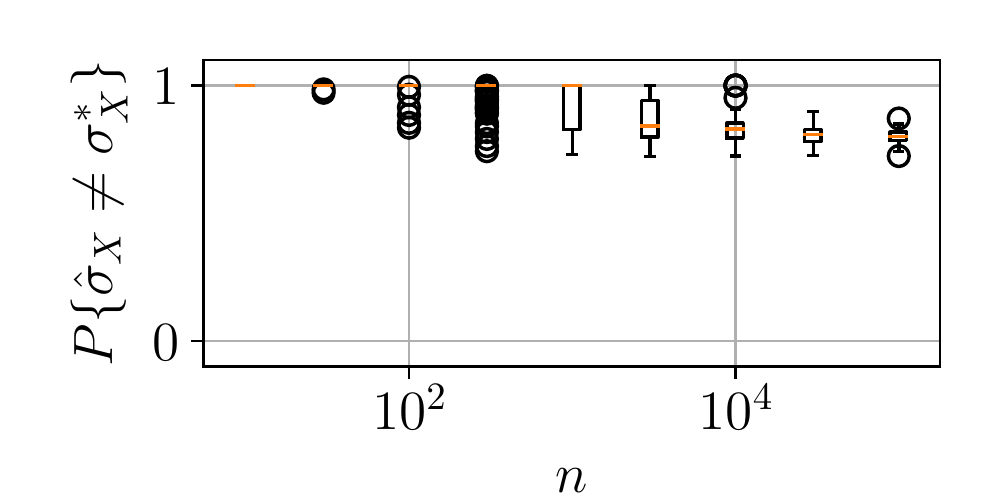}
      \caption{Dynamics for $\alpha=0.2$.}
      \label{fig:learning-in-n-noisy}
    \end{subfigure}\begin{subfigure}{.4\linewidth}
      \centering
      \includegraphics[width=\linewidth]{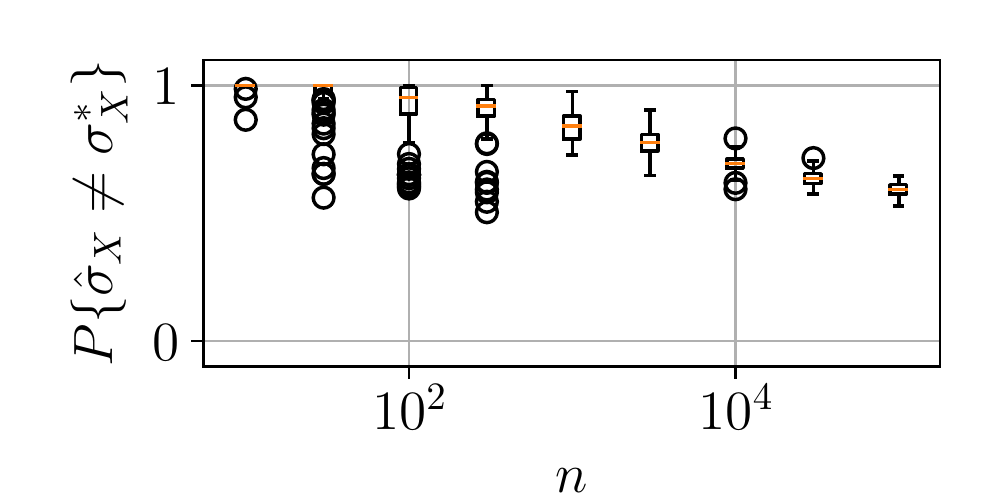}
      \caption{Dynamics for $\alpha=0.8$}
      \label{fig:learning-in-n-separ}
    \end{subfigure}
    \caption{Boxplot of 100 independent estimations of $\p\{ \widehat{\sigma}_X \ne \sigma^*_X\}$
	as function of $n$.}\label{fig:loss-simu-supp}
    \bigskip
    \bigskip
    \begin{subfigure}{.4\linewidth}
      \centering
      \includegraphics[width=\linewidth]{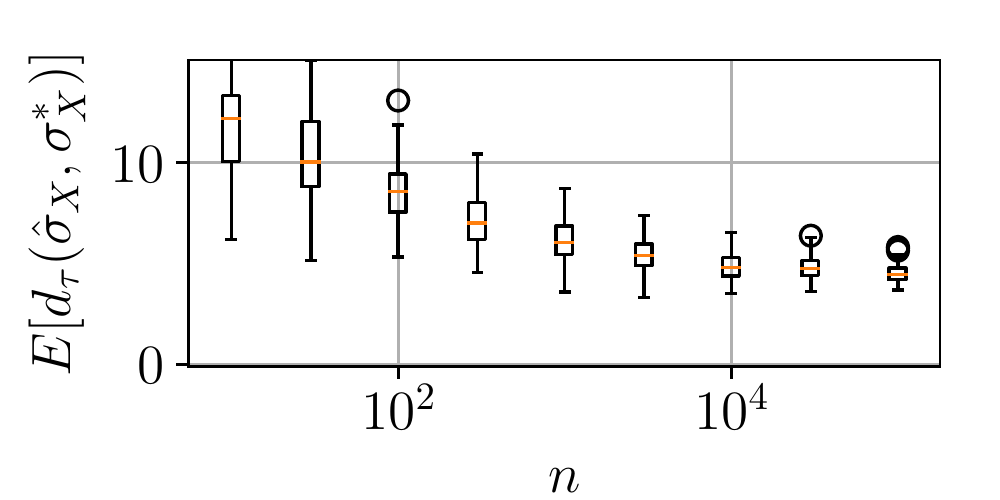}
      \caption{Dynamics for $\alpha=0.2$.}
      \label{fig:learning-in-n-noisy}
    \end{subfigure}\begin{subfigure}{.4\linewidth}
      \centering
      \includegraphics[width=\linewidth]{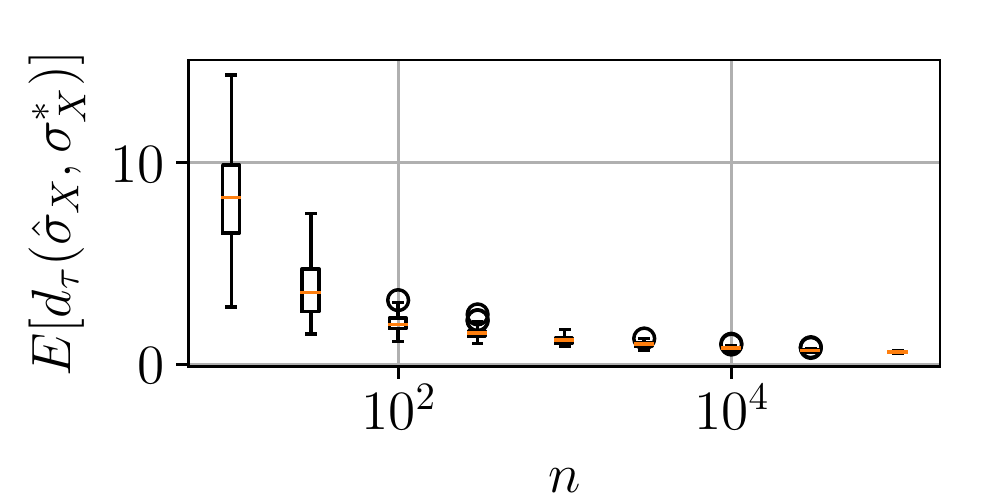}
      \caption{Dynamics for $\alpha=0.8$}
      \label{fig:learning-in-n-separ}
    \end{subfigure}
    \caption{Boxplot of 100 independent estimations of 
	$\E\left[ d_\tau(\hat{\sigma}_X, \sigma_X^*) \right]$ 
    as function of $n$.}\label{fig:kendall-simu-supp}
\end{figure}

One sees that learning is fast when $\alpha$ is close to 1, as expected.
\Cref{fig:kendall-simu-supp} shows that the average Kendall's $\tau$ distance
decreases quickly when $\alpha$ is close to $1$, as does the
the proportion of cycle in predictions, see \Cref{fig:cycle-simu-supp}.
On the other hand, due to the difficulty of predicting a complete permutation,
the influence of the noise parameter on the evolution of the probability of error when $n$
grows is more subtle, see \Cref{fig:loss-simu-supp}.
 
\subsection{Experiments on real data}

The MNIST dataset is composed of $28 \times 28$ grayscale images of digits and labels being
the value of the digits. In this experiment, we learn to predict the value of the
digit between $K=10$ classes corresponding to digits between $0$ and $9$.
The dataset contains $60,000$ images for training and $10,000$ images for
testing, all equally distributed within the classes. This dataset has been
praised for its accessibility, but was recently criticised for being too easy,
which led to the introduction of the dataset Fashion-MNIST, see \cite{fashion-mnist}.
It has the same structure as MNIST, with regard to train and test splits, number
of classes and and image size. It consists in classifying types of clothing apparel, 
\textit{e.g.} dress, coat and sandals, and is harder to classify than MNIST.

Our experiments aim to show that the OVO approach for top-$k$ classification,
\textit{cf} Eq. \eqref{eq:topk_risk}, can surpass rankings relying on
the scores output by multiclass classification algorithms. For that
matter, we evaluated the performances of both approaches using a logistic
regression to solve binary classification in the OVO case and multiclass classification
in the other. For that matter, we relied on the implementations provided by
the \texttt{python} package \texttt{scikit-learn}, specifically the 
\texttt{LogisticRegressionCV} class. The dimensionality of the data was reduced using standard
PCA with enough components to retain 95\% of the variance for both datasets,
which makes for $153$ components for MNIST and $187$ components for Fashion-MNIST.

Results are summarized in \Cref{tab:real-data-exps}. They show that the OVO approach
performs better than the logistic regression for the top-$1$ accuracy, \textit{i.e.}
classification accuracy, as well as for the top-$5$ accuracy. While the OVO approach
requires us to train $K(K-1)/2 = 45$ models, those are trained with less data and
output values. Both approaches end up requiring a similar amount of time to be trained.

\end{document}